\documentclass{article}
\usepackage[letterpaper, textwidth=5.5in,textheight=9in,centering]{geometry}
\usepackage[utf8]{inputenc} 
\usepackage[T1]{fontenc}
\usepackage{hyperref}
\usepackage{url}
\usepackage{booktabs}
\usepackage{amsfonts}
\usepackage{nicefrac}
\usepackage{microtype}
\usepackage{xcolor}

\usepackage{amsmath}
\usepackage{amssymb}
\usepackage{mathtools}
\usepackage{amsthm}
\usepackage{bm}

\usepackage{algorithm}
\usepackage{algpseudocode}

\usepackage{makecell}
\usepackage{thmtools}
\usepackage{thm-restate}
\usepackage{graphicx}
\usepackage{subcaption}
\usepackage{multirow}
\usepackage{etoc}
\usepackage{natbib}

\usepackage{parskip}

\usepackage{footnotehyper}

\definecolor{darkblue}{rgb}{0,0.08,0.45}
\hypersetup{
    colorlinks=true,
    linkcolor=darkblue,
    citecolor=darkblue,
    urlcolor=darkblue
}

\theoremstyle{plain}
\newtheorem{theorem}{Theorem}[section]
\newtheorem{proposition}[theorem]{Proposition}
\newtheorem{lemma}[theorem]{Lemma}
\newtheorem{corollary}[theorem]{Corollary}
\theoremstyle{definition}

\theoremstyle{remark}

\title{Assessing Per-Sample Membership Inference Vulnerability without Retraining}
\date{}

\author{
  Valentin Dorseuil \thanks{Corresponding author: \texttt{valentin.dorseuil@ens.psl.eu}\\ \indent Preprint under review.} \\
  DI ENS, École normale supérieure\\
  Université PSL, CNRS\\
  \and
  Jamal Atif \\
  CMAP, École polytechnique\\
  Institut Polytechnique de Paris\\
  \and
  Olivier Cappé \\
  DI ENS, École normale supérieure\\
  Université PSL, CNRS\\
}

\begin{document}

\maketitle

\begin{abstract}
    Recent work in the privacy literature shows that sample-targeted membership inference attacks (MIAs) significantly outperform untargeted approaches by a wide margin. Motivated by this observation, we address the following question:
    \emph{can the privacy vulnerability of individual training points be assessed without training shadow models?} We show that per-sample exposure to MIA is governed not only by a point's loss, but also by a data-dependent geometric measure.
    In the linear setting, we derive a closed-form decomposition of individual black-box MIA vulnerability into a population leverage score and a residual loss term, making explicit how sample-dependent geometry translates into privacy exposure. Since the final layer of most modern architectures is linear, we extend this framework to deep networks and propose a surrogate score operating on last-layer representations that requires only a single trained model and no shadow models. 
    Empirical evaluations across diverse datasets and architectures show that our score outperforms loss and gradient-norm baselines at identifying the highest-risk points under state-of-the-art attacks, providing a computationally efficient and theoretically grounded tool for per-sample privacy risk assessment.
\end{abstract}

\section{Introduction}

Modern machine learning models, and deep neural networks in particular, are known to memorize aspects of their training data~\citep{zhang2017understanding, carliniSecretSharerEvaluating2019}. This memorization induces privacy vulnerabilities that can be exploited by \emph{Membership Inference Attacks} (MIAs), which aim to determine whether a specific data point was included in the training set or not~\citep{shokriMembershipInferenceAttacks2017,carliniMembershipInferenceAttacks2022}. 
A principled defense against membership inference is provided by Differential Privacy (DP)~\citep{DifferentialPrivacyDwork}, implemented in deep learning through noise injection in stochastic gradient methods~\citep{abadiDeepLearningDifferential2016}. However, controlling the trade-off between privacy protection and model utility remains challenging. Noise calibration typically relies on worst-case theoretical accounting, paired with empirical privacy auditing via MIAs, and often leads to either overly conservative noise levels or insufficient privacy protection. In this context, designing MIAs for empirical auditing is essential to quantify privacy leakage in non-privately trained models~\citep{yeomPrivacyRiskMachine2018} or to validate the practical tightness of DP guarantees in privately trained models~\citep{nasrAdversaryInstantiationLower2021, jagielskiAuditingDifferentiallyPrivate2020}.

While such auditing is now standard practice~\citep{carliniMembershipInferenceAttacks2022, nasrAdversaryInstantiationLower2021, zarifzadehLowCostHighPowerMembership2024}, relying on aggregate metrics such as average accuracy or AUC is insufficient. 
These global measures can obscure critical risk heterogeneity, as outliers and rare subgroups are significantly more prone to memorization than typical samples~\citep{carliniMembershipInferenceAttacks2022,feldman2020neural}. Consequently, a model certified as private on average may still expose specific points to high privacy risks.
To address this heterogeneity, recent work is increasingly focusing on \emph{per-sample} privacy risk assessment, aiming to quantify membership leakage at the level of each data point, separately, rather than in aggregate. State-of-the-art methods for per-sample auditing~\citep{carliniMembershipInferenceAttacks2022,zarifzadehLowCostHighPowerMembership2024} predominantly rely on \emph{shadow models}. 
These techniques train multiple shadow models on random data splits to characterize each data point's influence on the model's behavior. This enables identification of which points exhibit increased sensitivity to their presence in the training dataset. 
However, these techniques are computationally prohibitive, especially for large-scale models, as they require retraining the model many times. This leads to our central question stated in the abstract: \emph{can the privacy vulnerability of individual training points be assessed without training shadow models?}

To address this question in a principled way, we first consider the baseline case of Gaussian linear regression. Even in this simple model, the task poses interesting challenges: the fixed-design analysis from classical statistics leads to the \emph{leverage score} (see below), which quantifies a data point's geometric influence solely through its feature vector, ignoring the response.
The fully randomized approach, on the other hand, provides metrics that depend on general problem characteristics (dimensions, covariance operator of the features) but do not account for specific data points (see, e.g., \citet{tan2022parameters}). 
To obtain relevant per-sample risk exposure metrics, we propose to consider non-asymptotic results in the randomized Gaussian setting by \emph{conditioning on a single data point} (i.e., its feature vector and its response).

For Gaussian linear regression, we derive a closed-form expression for the expected leave-one-out generalization gap, the signal exploited by state-of-the-art black-box attacks such as LiRA \citep{carliniMembershipInferenceAttacks2022}, and show that it decomposes, at first order, into a response-free geometric term, the population leverage score~$\bar{h}_{ii}$, and a response-dependent structural error~$\varepsilon_i$. 
Membership inference vulnerability is therefore governed jointly by a point's geometric measure and its structural error. Samples that are both geometrically influential in shaping the model’s parameters and poorly fit by the model are at the highest risk of privacy leakage.

To extend this analysis beyond the linear setting, we approximate the leave-one-out loss gap using two classical estimators: an influence-function estimate and a Newton-step estimate. Restricted to the last layer of a trained network, both admit closed-form expressions in terms of the leverage score of the learned representations, and can be computed from a single trained model at negligible cost. This yields a scalable, theoretically grounded proxy for per-sample privacy risk that tracks the rankings produced by state-of-the-art shadow-model attacks, without any retraining.

The paper is organized as follows. In Section~\ref{sec:theory} we derive a closed-form expression for the expected leave-one-out generalization gap of a target point in Gaussian linear regression, showing that the signal driving LiRA-style attacks decomposes, at first order, into a label-free geometric term, the population leverage~$\bar{h}_{ii}$, and a label-dependent structural error~$\varepsilon_i$. In Section~\ref{sec:gen_lin}, we propose two retraining-free per-sample vulnerability scores for general differentiable models, based, respectively, on influence-function and Newton-step approximations of the leave-one-out loss. 
Restricted to the last layer, both admit closed-form expressions in terms of the leverage score of the last-layer representation, which can be efficiently computed for deep networks. Finally, Section~\ref{sec:exp} reports the performance of the proposed vulnerability scores in terms of recovery of the results obtained by state-of-the-art retraining-based MIA methods. In particular, we observe that the proposed metrics consistently outperform commonly used baselines, such as training loss or terminal gradient-norm, across various configurations of dataset/architecture pairs.
We start with a quick review of the relevant literature.

\section{Related Work}

\paragraph{Membership Inference Attacks.} 
Membership inference aims to determine if a specific sample was used to train a model. Early approaches relied on simple metric-based classifiers, exploiting overfitting signals such as prediction confidence, entropy, or the magnitude of gradients~\citep{shokriMembershipInferenceAttacks2017, yeomPrivacyRiskMachine2018}. While computationally inexpensive, these methods often struggle to distinguish between a "vulnerable" member and a "hard" non-member.

To address this, current state-of-the-art methods adopt the \emph{shadow model} paradigm~\citep{shokriMembershipInferenceAttacks2017}. By training multiple models on different data splits, attacks like LiRA (Likelihood Ratio Attack)~\citep{carliniMembershipInferenceAttacks2022} or RMIA (Robust Membership Inference Attack)~\citep{zarifzadehLowCostHighPowerMembership2024} essentially perform a hypothesis test on the loss distribution of a target point~\citep{zarifzadehLowCostHighPowerMembership2024}. While highly effective, these approaches are computationally prohibitive, often requiring hundreds of training runs to estimate the risk for a single point accurately. Our work seeks to achieve the precision of these hypothesis-test based methods without their computational burden, by substituting retraining with geometric analysis.

\paragraph{Influence Functions and Data Attribution.} 
Recent work has begun to explore the connection between influence and privacy. Notably, \citet{feldman2020neural} and \citet{feldman2020does} link the \emph{memorization} of a sample to its influence on the learning process, arguing that memorization is necessary for generalization in long-tailed distributions. 
Other work has used influence functions to craft new attack methods \citep{cohenMembershipInferenceAttack2024, suri2024parameters}, but these approaches inherit two well-known limitations of influence functions in deep learning: a prohibitive computational cost, and a fragility that often renders the resulting estimates unreliable~\citep{basu2021influence}.

Our work addresses these limitations by confining influence estimation to the last layer, where closed-form expressions are available. This reduces the computational overhead and avoids the fragility of full-network influence, while remaining a faithful proxy of per-sample influence as shown in our experiments.

\paragraph{Privacy Auditing and Heterogeneity.} 
Differential Privacy (DP)~\citep{DifferentialPrivacyDwork} provides worst-case guarantees for membership privacy. 
Using algorithms such as DP-SGD~\citep{abadiDeepLearningDifferential2016}, one can train Deep Learning models to be private. However, these bounds are often loose and do not capture the empirical reality that privacy risk is non-uniform: some outlier samples are far more exposed than others~\citep{zarifzadehLowCostHighPowerMembership2024}. 

Auditing per-sample privacy risk, i.e., estimating the specific ($\varepsilon, \delta$) DP parameters for a given sample, remains an open challenge. 
Existing auditing tools rely heavily on the aforementioned shadow model techniques or randomized smoothing~\citep{lecuyer2019certified}, which scale poorly.
Our work contributes to this domain by proposing an efficient individual privacy-risk metric for identifying vulnerable samples. 
This approach complements existing auditing frameworks by providing a first-pass, zero-retraining filter for high-risk data points. 
Removing these vulnerable points from the training set is not an option since, in practice, such deletions often expose new points, an effect described as the
\emph{privacy onion effect} by~\citet{carliniPrivacyOnionEffect2022a}. Instead, our method can serve as a tool for canary selection~\citep{carliniSecretSharerEvaluating2019} or per-sample privacy training~\citep{wang2018perinstancedifferentialprivacy,alaggan2015heterogeneousdifferentialprivacy}.

\section{Analysis of Membership Vulnerability in the Linear Regression Case}
\label{sec:theory}

To rigorously understand the privacy leakage exploited by black-box attacks, we first analyze their behavior within the context of linear models.
State-of-the-art MIAs, such as shadow model classifiers~\citep{shokriMembershipInferenceAttacks2017}, LiRA~\citep{carliniMembershipInferenceAttacks2022}, and RMIA~\citep{zarifzadehLowCostHighPowerMembership2024}, compute membership scores via classifiers or likelihood ratio tests fitted to per-sample losses. In practice, they estimate the loss distribution of each sample, conditioned on whether it was included in the training set, by training shadow models on multiple data splits. 

To model the stochasticity of training on random subsamples, we isolate a fixed target point $i$ and treat the remainder of the dataset as being drawn from an underlying probability distribution. We then analyze the marginal distribution of the target's loss under two scenarios: the $\text{IN}$ setting, where the model is trained on the fixed point alongside the samples drawn from the distribution, and the $\text{OUT}$ setting, where the model is trained exclusively on the sampled distribution without the target point.

\subsection{Problem Setup and Notation}

Let $\mathcal{Z} \subset \mathbb{R}^p \times \mathbb{R}$ be the data space, and let $z_i = (x_i, y_i) \in \mathcal{Z}$ be a fixed target observation. Throughout our analysis, we condition on this target point, treating it as a deterministic constant. We consider a full training dataset $S$ of size $n$ that contains this fixed target $z_i$. The other $n-1$ points in $S$, denoted collectively as $S_{-i} = \{(X_j, Y_j)\}_{j \neq i}$, are random variables sampled from the underlying data distribution. 

Because $z_i$ is fixed, the randomness in our estimators arises entirely from these $n-1$ background observations. We assume their covariates are drawn i.i.d. from a multivariate Gaussian distribution, $X_j \sim \mathcal{N}(0, \Sigma)$, and their responses follow the linear generative model $Y_j \mid X_j \sim \mathcal{N}(X_j^\top\theta, \sigma^2)$ where $\theta\in \mathbb{R}^p$ is the unknown true parameter and $\sigma^2$ the structural noise variance. We consider the scalar response case for simplicity, and the results are extended to multivariate responses in Appendix~\ref{app:sec3_proofs}.

Let $X \in \mathbb{R}^{n \times p}$ denote the full design matrix, where the $i$-th row is the deterministic $x_i^\top$ and the remaining rows form the random submatrix $X_{-i} \in \mathbb{R}^{(n-1) \times p}$. Similarly, let $Y \in \mathbb{R}^n$ be the response vector containing $y_i$ and the random subvector $Y_{-i} \in \mathbb{R}^{n-1}$.

Let $\hat{\theta} = (X^\top X)^{-1} X^\top Y$ be the ordinary least squares (OLS) estimator trained on the full dataset $S$, and $\hat{\theta}_{-i}$ be the estimator trained on $S_{-i}$. We denote $e_i = y_i - x_i^\top \hat{\theta}$ the residual for the target point under the full model and $e_{-i} = y_i - x_i^\top \hat{\theta}_{-i}$ the residual for the target point under the leave-one-out model. The corresponding squared residuals are the training loss $\ell_i = (y_i - x_i^\top \hat{\theta})^2$ and the leave-one-out loss $\ell_{-i} = (y_i - x_i^\top \hat{\theta}_{-i})^2$. 

We introduce the leverage matrix $H = X (X^\top X)^{-1} X^\top$, and denote $h_{ii} = x_i^\top (X^\top X)^{-1} x_i$ the empirical leverage score of the target point, which is the $i$-th diagonal element of $H$. This score is a random variable due to its dependence on $X_{-i}$, and quantifies the geometric influence of the point on its own fitted value~\citep{regressionOutliers} illustrated by the leave-one-out identity $e_i = (1-h_{ii})\,e_{-i}$~\citep{allenLeaveOneOut}.

We define two deterministic quantities characterizing the target point: its population leverage score $\bar{h}_{ii} = x_i^\top \Sigma^{-1} x_i$, which corresponds to the squared Mahalanobis distance to the feature distribution, and its structural error $\varepsilon_i = y_i - x_i^\top \theta$.

Our goal is to characterize the distribution of the losses $\ell_i$ and $\ell_{-i}$ and see how these distributions relate to the target point's MIA vulnerability.

\subsection{Finite-Sample Distributions of Leverage Score and Loss}

We begin by analyzing the leave-one-out Gram matrix $G_{-i} = X_{-i}^\top X_{-i}$, which follows a Wishart distribution $\mathcal{W}_p(\Sigma, n-1)$~\citep{multivariateStatTheory}. Applying the Sherman–Morrison–Woodbury identity allows us to express both the empirical leverage score $h_{ii}$ and the leave-one-out residual $e_{-i}$ as functions of $G_{-i}$, thereby isolating the geometric randomness of the design from the structural noise. The following lemma makes this decomposition explicit.

\begin{lemma}[Joint Law of Leverage Scores and Residuals]
\label{lem:leverage-residuals}
Let $G_{-i} = X_{-i}^\top X_{-i}$ denote the leave-one-out Gram matrix, which follows a Wishart distribution $\mathcal{W}_p(\Sigma, n-1)$~\citep{multivariateStatTheory}. We define
\begin{equation}
V = \frac{\bar{h}_{ii}}{x_i^\top G_{-i}^{-1} x_i}, 
\qquad 
Z = -\frac{\sqrt{V}}{\sigma \sqrt{\bar{h}_{ii}}} \, x_i^\top G_{-i}^{-1} X_{-i}^\top \varepsilon_{-i}.
\end{equation}
Then, the empirical leverage score $h_{ii}$ and the leave-one-out residual $e_{-i}$ satisfy the identities:
\begin{equation}
h_{ii} = \frac{\bar{h}_{ii}}{V + \bar{h}_{ii}}, 
\qquad 
e_{-i} = \varepsilon_i + \sigma \frac{\sqrt{\bar{h}_{ii}}}{\sqrt{V}} Z.
\end{equation}
Moreover, conditioned on $(x_i, y_i)$, $V \sim \chi^2_{n-p}$ and $Z \sim \mathcal{N}(0,1)$, with $Z \perp\!\!\!\perp V$.
\end{lemma}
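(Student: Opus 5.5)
The proof splits into two parts: first the deterministic identities, then the distributional claims.

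\textbf{Identities.} Write $G = X^\top X = G_{-i} + x_i x_i^\top$ and apply Sherman--Morrison:
\[
G^{-1} = G_{-i}^{-1} - \frac{G_{-i}^{-1}x_i x_i^\top G_{-i}^{-1}}{1 + x_i^\top G_{-i}^{-1}x_i}.
\]
Setting $q = x_i^\top G_{-i}^{-1}x_i$, this gives $h_{ii} = q - q^2/(1+q) = q/(1+q)$. By definition $V = \bar h_{ii}/q$, so $q = \bar h_{ii}/V$. Substituting yields $h_{ii} = \bar h_{ii}/(V+\bar h_{ii})$.

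For the residual, write $Y_{-i} = X_{-i}\theta + \varepsilon_{-i}$, where $\varepsilon_{-i}\sim\mathcal N(0,\sigma^2 I_{n-1})$ is independent of $X_{-i}$. Then
\[
\hat\theta_{-i} = \theta + G_{-i}^{-1}X_{-i}^\top\varepsilon_{-i},
\qquad
e_{-i} = \varepsilon_i - x_i^\top G_{-i}^{-1}X_{-i}^\top\varepsilon_{-i}.
\]
From the definition of $Z$, the correction term equals $-x_i^\top G_{-i}^{-1}X_{-i}^\top\varepsilon_{-i} = \sigma\sqrt{\bar h_{ii}}\,Z/\sqrt V$, which gives the stated formula for $e_{-i}$. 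This part is pure algebra, and I expect no difficulty.

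\textbf{Law of $Z$ given $X_{-i}$.} Condition on $X_{-i}$. The quantity $x_i^\top G_{-i}^{-1}X_{-i}^\top\varepsilon_{-i}$ is then a linear form in $\varepsilon_{-i}$, hence centered Gaussian with variance
\[
\sigma^2\, x_i^\top G_{-i}^{-1}X_{-i}^\top X_{-i}G_{-i}^{-1}x_i = \sigma^2 q = \sigma^2\bar h_{ii}/V.
\]
The prefactor $\sqrt V/(\sigma\sqrt{\bar h_{ii}})$ is exactly the inverse of this standard deviation. So $Z\mid X_{-i}\sim\mathcal N(0,1)$. This conditional law does not depend on $X_{-i}$, so $Z\sim\mathcal N(0,1)$ unconditionally and $Z$ is independent of $X_{-i}$, hence of $V$, which is a function of $X_{-i}$.

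\textbf{Law of $V$ (main step).} This rests on the classical Wishart fact: if $W\sim\mathcal W_p(\Sigma,m)$ with $m\ge p$ and $a\neq0$ is fixed, then
\[
\frac{a^\top\Sigma^{-1}a}{a^\top W^{-1}a}\sim\chi^2_{m-p+1}
\]
(Muirhead, Thm.~3.2.12). Here $a = x_i$ is deterministic and $m = n-1$, so $V\sim\chi^2_{n-p}$. To keep the proof self-contained I would sketch why this holds. Whiten by setting $W = \Sigma^{1/2}\tilde W\Sigma^{1/2}$ with $\tilde W\sim\mathcal W_p(I,m)$, and $b = \Sigma^{-1/2}a$. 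This reduces the claim to $b^\top b/(b^\top\tilde W^{-1}b)$. Rotate so that $b\propto e_1$, which is allowed by orthogonal invariance of $\mathcal W_p(I,m)$. The ratio becomes $1/(\tilde W^{-1})_{11}$. By the block-inverse formula this equals the Schur complement $\tilde W_{11} - \tilde W_{12}\tilde W_{22}^{-1}\tilde W_{21}$, which is the squared residual norm from projecting the first column of the $m\times p$ Gaussian data matrix onto the span of the other $p-1$ columns. That projection leaves $m-(p-1)$ degrees of freedom, giving $\chi^2_{m-p+1}$.

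The delicate bookkeeping is the degree count $m-p+1 = n-p$ and the requirement $n-1\ge p$ so that $G_{-i}$ is almost surely invertible. I would state that assumption explicitly.
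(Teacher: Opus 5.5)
Your proposal is correct and follows the paper's proof essentially step for step. You use Sherman--Morrison for $h_{ii}$, the decomposition $\hat\theta_{-i} = \theta + G_{-i}^{-1}X_{-i}^\top\varepsilon_{-i}$ for $e_{-i}$, conditional Gaussianity of $Z$ given $X_{-i}$ for both its law and its independence from $V$, and the inverse-Wishart quadratic-form fact for $V\sim\chi^2_{n-p}$. The only differences are that you sketch why the Wishart fact holds (whitening, rotation to $e_1$, Schur complement as a projection residual) where the paper simply cites it, and that you state the assumption $n-1\ge p$ explicitly.
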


The independence of $Z$ and $V$ in Lemma~\ref{lem:leverage-residuals} is an important property: given $X_{-i}$, the standardized prediction error $Z$ is $\mathcal{N}(0,1)$ regardless of the design, hence independent of any function of it, including $V$. Combining the leave-one-out identity $e_i = (1-h_{ii})e_{-i}$~\citep{allenLeaveOneOut}, which gives $\ell_{i} = (1-h_{ii})^2 \ell_{-i}$, with Lemma~\ref{lem:leverage-residuals}, we obtain a full characterization of the joint distribution of the losses at point $i$ in Proposition~\ref{prop:loss_distribution}.

\begin{proposition}[Per-sample Loss Distribution]
\label{prop:loss_distribution}
Under the Gaussian linear regression model, the leave-one-out loss $\ell_{-i}$ and the training loss $\ell_i$ at a point $(x_i, y_i)$ satisfy
\begin{equation}
\ell_{-i} = \left( \varepsilon_i + \sigma \frac{\sqrt{\bar{h}_{ii}}}{\sqrt{V}} Z \right)^2, 
\qquad 
\ell_i = \left( \varepsilon_i + \sigma \frac{\sqrt{\bar{h}_{ii}}}{\sqrt{V}} Z \right)^2 \left( \frac{V}{V + \bar{h}_{ii}} \right)^2,
\end{equation}
where $V$ and $Z$ are defined in Lemma~\ref{lem:leverage-residuals}, satisfying $V \sim \chi^2_{n-p}$ and $Z \sim \mathcal{N}(0,1)$, with $Z\perp\!\!\!\perp V$.
\end{proposition}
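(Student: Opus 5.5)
The plan is to read Proposition~\ref{prop:loss_distribution} off Lemma~\ref{lem:leverage-residuals}, which we take as given, and the classical leave-one-out identity. The only real work is to check that identity in our setting, where the target row is deterministic and appended to a random design.

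\textbf{Step 1 (leave-one-out loss).} By definition $\ell_{-i} = e_{-i}^2$. Lemma~\ref{lem:leverage-residuals} gives
\[
e_{-i} = \varepsilon_i + \sigma \frac{\sqrt{\bar{h}_{ii}}}{\sqrt{V}} Z .
\]
Squaring this gives the first formula. The same lemma supplies the distributional claims: $V \sim \chi^2_{n-p}$ and $Z \sim \mathcal{N}(0,1)$, with $Z \perp\!\!\!\perp V$ conditionally on $(x_i,y_i)$. Since $\ell_i$ will be written as a function of the same pair $(V,Z)$, the joint law of $(\ell_{-i},\ell_i)$ is then fully determined.

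\textbf{Step 2 (the identity $e_i = (1-h_{ii})e_{-i}$).} We work on the event where $G_{-i}$ is invertible, which has probability one when $n-1 \ge p$. Write $X^\top X = G_{-i} + x_i x_i^\top$ and apply Sherman--Morrison:
\[
(X^\top X)^{-1} x_i = \frac{G_{-i}^{-1} x_i}{1 + x_i^\top G_{-i}^{-1} x_i}.
\]
Together with $X^\top Y = X_{-i}^\top Y_{-i} + x_i y_i$, this gives
\[
\hat\theta = \hat\theta_{-i} + (X^\top X)^{-1} x_i\, e_{-i}.
\]
Taking the inner product with $x_i$ yields $x_i^\top\hat\theta = x_i^\top\hat\theta_{-i} + h_{ii}\, e_{-i}$. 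Subtracting from $y_i$ then gives $e_i = (1-h_{ii})\,e_{-i}$, and hence $\ell_i = (1-h_{ii})^2 \ell_{-i}$.

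\textbf{Step 3 (substitution).} Lemma~\ref{lem:leverage-residuals} gives
\[
h_{ii} = \frac{\bar{h}_{ii}}{V+\bar{h}_{ii}}, \qquad\text{so}\qquad 1-h_{ii} = \frac{V}{V+\bar{h}_{ii}}.
\]
Substituting this and the expression for $\ell_{-i}$ from Step~1 into $\ell_i = (1-h_{ii})^2\ell_{-i}$ gives the second formula.

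The only point needing care is the almost-sure invertibility of $G_{-i}$ and of $X^\top X$. Both hold almost surely because $\Sigma \succ 0$ and $n-1\ge p$, which is needed anyway for $\chi^2_{n-p}$ to make sense. Given the lemma, I do not expect any substantive obstacle; the proposition is a direct corollary.
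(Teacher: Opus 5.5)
Your proposal is correct and matches the paper's proof: it reads $\ell_{-i}=e_{-i}^2$ off Lemma~\ref{lem:leverage-residuals}, uses $e_i=(1-h_{ii})e_{-i}$, and substitutes $1-h_{ii}=V/(V+\bar{h}_{ii})$. The only difference is that you derive the leave-one-out identity yourself via the rank-one update, whereas the paper simply cites it as a standard regression update rule; your derivation is sound.
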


\subsection{Per-Point Vulnerability under Optimal Inference}
Using Proposition~\ref{prop:loss_distribution} and the Neyman–Pearson Lemma~\citep{neymanLikelihood}, we derive the optimal black-box attack in closed form. In the large-sample regime $n - p \gg 1$, the variable $V \sim \chi^2_{n-p}$ concentrates sharply around its mean, so we replace it by the constant $n-p$ and Proposition~\ref{prop:loss_distribution} reduces to a deterministic rescaling in distribution: $\ell_i \mid \mathcal{H}_{\text{IN}} = c_i^2 \cdot (\ell_{-i} \mid \mathcal{H}_{\text{OUT}})$, with the scale factor $c_i^2 = ((n-p)/ (n-p + \bar{h}_{ii}))^2 < 1$. 
The likelihood ratio is then monotone in the observed loss, so the Neyman–Pearson optimal test thresholds $\ell_{i, \text{obs}}$ directly.

\begin{corollary}[Optimal MIA Trade-off]
\label{cor:optimal_attack}
Let $F_{i, \text{OUT}}$ denote the CDF of $\ell_{-i}$ under $\mathcal{H}_{\text{OUT}}$. 
For any target false positive rate $\alpha_i \in (0,1)$, the false negative rate of the Neyman–Pearson optimal attack at point $i$ is
\begin{equation}
    \beta_i(\alpha_i) \;=\; 1 - F_{i, \text{OUT}}\!\left(\tfrac{1}{c_i^2}\, F_{i, \text{OUT}}^{-1}(\alpha_i)\right).
\end{equation}
\end{corollary}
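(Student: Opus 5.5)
Under the large-sample approximation stated just before the corollary, we take as given that $V$ is replaced by the constant $n-p$. Then Proposition~\ref{prop:loss_distribution} gives the exact distributional identity $\ell_i \mid \mathcal{H}_{\text{IN}} \overset{d}{=} c_i^2 \,\ell_{-i}$ with $\ell_{-i}\sim F_{i,\text{OUT}}$. Here $c_i^2=((n-p)/(n-p+\bar h_{ii}))^2\in(0,1)$. The plan has three steps: write both hypotheses for the observed loss explicitly, show that the Neyman--Pearson test is a lower threshold on the loss, and compute its size and power.

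\textbf{Step 1 (the two laws).} Under $\mathcal{H}_{\text{OUT}}$ the observed loss is $\ell_{-i} = \bigl(\varepsilon_i + \sigma\sqrt{\bar h_{ii}/(n-p)}\,Z\bigr)^2$. This is a scaled noncentral $\chi^2_1$: a continuous law on $(0,\infty)$ with density $f_{\text{OUT}}$ and strictly increasing CDF $F_{i,\text{OUT}}$, so $F_{i,\text{OUT}}^{-1}$ is well defined on $(0,1)$. Under $\mathcal{H}_{\text{IN}}$ the observed loss is $c_i^2\ell_{-i}$, with density $f_{\text{IN}}(\ell)=c_i^{-2} f_{\text{OUT}}(\ell/c_i^2)$.

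\textbf{Step 2 (monotone likelihood ratio), the main obstacle.} We must show that $\Lambda(\ell)=f_{\text{IN}}(\ell)/f_{\text{OUT}}(\ell)$ is nonincreasing in $\ell$. The Neyman--Pearson lemma then says the most powerful level-$\alpha_i$ test rejects $\mathcal{H}_{\text{OUT}}$ (declares ``member'') when $\ell_{i,\text{obs}}\le \tau$.

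Write the $\mathcal{H}_{\text{OUT}}$ law as $s\,\chi'^2_1(\lambda)$ with $s=\sigma^2\bar h_{ii}/(n-p)$ and $\lambda=\varepsilon_i^2/s$. Its density is
\[
f_{\text{OUT}}(\ell)\propto \ell^{-1/2}e^{-\ell/(2s)}\cosh\!\bigl(\sqrt{\lambda\ell/s}\bigr).
\]
The ratio is therefore
\[
\Lambda(\ell)= c_i^{-1}\exp\!\Bigl(-\tfrac{\ell}{2s}(c_i^{-2}-1)\Bigr)\,\frac{\cosh(\sqrt{\lambda\ell/s}/c_i)}{\cosh(\sqrt{\lambda\ell/s})}.
\]
Put $u=\sqrt{\ell}$. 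The log ratio is $-a u^2 + \log\cosh(bu/c_i)-\log\cosh(bu)$ with $a>0$, up to a constant. Using $\log\cosh(x)\le x^2/2$ does not settle monotonicity, since the $\cosh$ ratio grows and competes with the Gaussian factor. So I would first try a direct derivative argument: the derivative is $-2au+(b/c_i)\tanh(bu/c_i)-b\tanh(bu)$. This is negative when $\varepsilon_i$ is small relative to the noise, since then $a u^2$ dominates.

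If monotonicity fails in general, I would fall back on how the paper itself frames the result. It claims the ratio is monotone and restricts the attack to loss thresholding, so I would state the corollary for the threshold test, which is optimal under that monotonicity. In that case I would flag the monotonicity as holding in the regime the paper considers, namely the approximation that treats the law as a pure scale family.

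\textbf{Step 3 (error rates).} With the threshold test, the false positive rate is
\[
\alpha_i=\Pr_{\text{OUT}}(\ell\le\tau)=F_{i,\text{OUT}}(\tau),
\]
so $\tau=F_{i,\text{OUT}}^{-1}(\alpha_i)$. The false negative rate is
\[
\beta_i=\Pr_{\text{IN}}(\ell>\tau)=\Pr(c_i^2\ell_{-i}>\tau)=1-F_{i,\text{OUT}}(\tau/c_i^2),
\]
which is the claimed formula. Continuity of $F_{i,\text{OUT}}$ ensures that the level is attained exactly, with no randomization needed.
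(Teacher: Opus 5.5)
Your Step 3 is the paper's appendix argument. The paper takes ``reject when $\ell_{i,\text{obs}}$ is small'' as the Neyman--Pearson rule, citing only the unproved main-text sentence that the likelihood ratio is monotone, and then computes $F_{i,\text{OUT}}(\gamma_\alpha/c_i^2)$. You rightly identify that monotonicity as the crux. However, Step 2 does not establish it, and your fallback is wrong. The ratio you wrote down is already the pure scale-family ratio, so appealing to ``the regime the paper considers'' adds nothing. In fact your own derivative settles the question negatively. Expanding $\tanh$ at $0$ gives
\[
g'(u) = \frac{u\,(c_i^{-2}-1)(\lambda-1)}{s} + O(u^3),
\]
so $\Lambda$ is strictly increasing near $\ell=0$ whenever $\lambda=\varepsilon_i^2(n-p)/(\sigma^2\bar h_{ii})>1$.

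Exactly, set $y=\lambda\ell/s$ and $\psi(y)=\sqrt y\tanh\sqrt y$. Then $u\,g'(u)=(y/c_i^2-y)\,[\,\text{chord slope of }\psi\text{ on }[y,y/c_i^2]-1/\lambda\,]$. Since $\psi$ is strictly concave with $\psi'(0^+)=1$, that chord slope decreases strictly from $1$ to $0$. Hence $\Lambda$ is nonincreasing on $(0,\infty)$ if and only if $\lambda\le 1$. For $\lambda>1$, $\Lambda$ is unimodal: it rises from $\Lambda(0^+)=1/c_i$ and then decays to $0$. Moreover, $\lambda>1$ is the typical case in the $n-p\gg1$ regime used to freeze $V$, because the OUT prediction variance $s=\sigma^2\bar h_{ii}/(n-p)$ shrinks with $n$.

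Consequently the statement as written cannot be proved in general. Take $\lambda>1$ and $\tau=F_{i,\text{OUT}}^{-1}(\alpha_i)$. The lower-threshold test at $\tau$ is most powerful if and only if $\Lambda(\tau)\le 1/c_i$. For smaller $\alpha_i$ (the low-FPR regime), the Neyman--Pearson test instead rejects on an interval $(\ell_1,\ell_2)$ with $\ell_1>0$. Its false negative rate is strictly below $1-F_{i,\text{OUT}}(\tau/c_i^2)$.

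What your Step 3, and the paper's proof, does establish for every $\lambda$ is narrower. Under the frozen-$V$ approximation, the displayed $\beta_i(\alpha_i)$ is the exact false negative rate of the loss-thresholding attack at level $\alpha_i$. This coincides with the optimal trade-off for all $\alpha_i$ exactly when $\lambda\le 1$. Your instinct to restate the result for the threshold test is therefore the right repair. But the Neyman--Pearson optimality claim must then be dropped, or conditioned on $\varepsilon_i^2\le\sigma^2\bar h_{ii}/(n-p)$. It cannot be rescued by the scale-family approximation.
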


The closed-form expression itself follows from a standard Neyman–Pearson argument; the corollary's value lies in clarifying \emph{which} features of the problem dictate vulnerability.
The full trade-off function $\alpha_i \mapsto \beta_i(\alpha_i)$ depends on the data only through $(\bar{h}_{ii}, \varepsilon_i)$ and the global parameters $(n, p, \sigma^2)$. In particular, the entire design matrix $X_{-i}$ enters only through the scalar population leverage $\bar{h}_{ii}$. The pair $(\bar{h}_{ii}, \varepsilon_i)$ thus acts as a per-point sufficient statistic for membership-inference vulnerability. 

\subsection{The Expected Generalization Gap}

Alongside the trade-off curve, the expected generalization gap $\mathbb{E}[\Delta\ell_i]$ has been identified as the fundamental theoretical bound on the power of loss-based MIAs~\citep{yeomPrivacyRiskMachine2018}. In our model, it admits a theoretical closed-form and a tractable asymptotic expression that make the dependence on $(\bar h_{ii}, \varepsilon_i)$ explicit.

\begin{proposition}[Expected Generalization Gap]
\label{prop:gap}
For $n-p > 2$, the expected difference between the leave-one-out and training loss for a fixed point $(x_i, y_i)$ is exactly:
\begin{equation}
    \mathbb{E}[\Delta\ell_i] = \mathbb{E}_{V \sim \chi^2_{n-p}} \left[ \left( \varepsilon_i^2 + \sigma^2 \frac{\bar{h}_{ii}}{V} \right) \frac{\bar{h}_{ii} (2V + \bar{h}_{ii})}{(V + \bar{h}_{ii})^2} \right].
\end{equation}

The asymptotic expansion of this gap in powers of $(n-p)^{-1}$ is:
\begin{equation}
    \mathbb{E}[\Delta\ell_i] = \frac{2\varepsilon_i^2 \bar{h}_{ii}}{n-p} + \frac{4\varepsilon_i^2 \bar{h}_{ii} + \bar{h}_{ii}^2(2\sigma^2 - 3\varepsilon_i^2)}{(n-p)^2} + \mathcal{O}\left(\frac{1}{(n-p)^3}\right).
\end{equation}
\end{proposition}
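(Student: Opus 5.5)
The plan is to condition on the design variable $V$ and use the independence $Z \perp\!\!\!\perp V$ from Lemma~\ref{lem:leverage-residuals}; this gives the exact formula. The expansion then follows from a Taylor expansion in $1/V$ together with the explicit inverse moments of a $\chi^2_{n-p}$ variable. Throughout I write $m = n-p$ and $h = \bar h_{ii}$.

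\textbf{Exact formula.} By Proposition~\ref{prop:loss_distribution}, $\ell_i = \ell_{-i}\,(V/(V+h))^2$, so
\[
\Delta\ell_i = \ell_{-i} - \ell_i = \ell_{-i}\, f(V), \qquad f(V) := 1 - \frac{V^2}{(V+h)^2} = \frac{h(2V+h)}{(V+h)^2}.
\]
Conditionally on $V$, the variable $Z$ is still $\mathcal N(0,1)$. Expanding the square then gives
\[
\mathbb E[\ell_{-i}\mid V] = \varepsilon_i^2 + 2\varepsilon_i\sigma\sqrt{h/V}\,\mathbb E[Z] + \sigma^2 \frac{h}{V}\,\mathbb E[Z^2] = \varepsilon_i^2 + \sigma^2\frac{h}{V}.
\]
The tower property yields the stated expression. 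For integrability, note that $0\le f\le 1$, so it suffices that $\mathbb E[1/V]<\infty$. Since $\mathbb E[1/V] = 1/(m-2)$, this holds exactly when $m>2$, which is the hypothesis of the proposition.

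\textbf{Expansion.} For large $V$ we have
\[
f(V) = \frac{2h}{V} - \frac{3h^2}{V^2} + R(V),
\qquad
\frac{h}{V}\, f(V) = \frac{2h^2}{V^2} + R'(V).
\]
I will use the chi-square inverse moments $\mathbb E[V^{-1}] = 1/(m-2) = 1/m + 2/m^2 + O(m^{-3})$ and $\mathbb E[V^{-2}] = 1/((m-2)(m-4)) = 1/m^2 + O(m^{-3})$. Plugging these in gives
\[
\varepsilon_i^2\Bigl(\frac{2h}{m} + \frac{4h}{m^2} - \frac{3h^2}{m^2}\Bigr) + \frac{2\sigma^2 h^2}{m^2} + O(m^{-3}),
\]
which is the claimed expansion.

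\textbf{Main obstacle: controlling the remainders uniformly in $V$.} The difficulty is that $V$ may be small, where the naive $1/V$ expansion is not valid. I would first show that the exact rational remainders satisfy $|R(V)|\le C h^3/V^3$ and $|R'(V)|\le C h^3/V^3$ for all $V>0$. For $R$ this should hold because $f$ is bounded and behaves like $2V/h$ near $0$, so the remainder is dominated by the explicit $V^{-1}$ and $V^{-2}$ terms there. For $R'$, the factor $h/V$ costs an extra power of $1/V$, which is why the bound is still cubic. These bounds, combined with $\mathbb E[V^{-3}] = O(m^{-3})$ (valid for $m>6$, harmless asymptotically), close the argument.

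If a clean global bound on the remainders proves awkward, I would fall back on splitting on the event $\{V<m/2\}$. On that event the chi-square tail is exponentially small in $m$, and since $f$ is bounded and $\mathbb E[V^{-2}]<\infty$, its contribution is negligible. On the complement, the Taylor remainder is directly $O(m^{-3})$.
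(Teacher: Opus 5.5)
Your proposal is correct and follows the paper's route: write $\Delta\ell_i=\ell_{-i}f(V)$, integrate out $Z$ using $Z\perp\!\!\!\perp V$, expand in $1/V$, insert $\mathbb{E}[V^{-1}]=1/(m-2)$ and $\mathbb{E}[V^{-2}]=1/((m-2)(m-4))$, and re-expand in powers of $1/m$; your global bounds $|R|,|R'|\le Ch^3/V^3$ together with $\mathbb{E}[V^{-3}]=O(m^{-3})$ supply the remainder control that the paper's proof only asserts. One small correction that does not affect the argument: since $f(V)=1-V^2/(V+h)^2$, we have $f(V)\to 1$ as $V\to 0$, not $f(V)\sim 2V/h$, but your cubic bound near $V=0$ uses only the boundedness of $f$, so it still holds.
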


Proposition~\ref{prop:gap} establishes that, to first order, the generalization gap is governed by the interaction between a point's structural error $\varepsilon_i^2$ and its leverage $\bar{h}_{ii}$. Intuitively, the points exhibiting the largest generalization gap are those that are simultaneously outliers in feature space (high $\bar{h}_{ii}$) and difficult to predict (high $\varepsilon_i^2$). Moreover, the gap vanishes at a rate of $1/n$, which confirms that the privacy risk associated with any individual point diminishes as the size of the dataset grows.

\begin{figure}[t]
    \centering
    \includegraphics{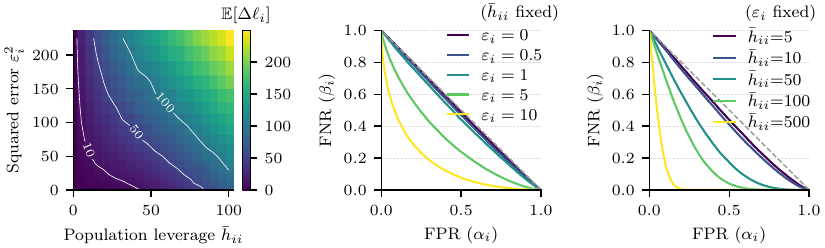}
    \caption{Generalization gap and membership inference attack performance for the Gaussian linear regression model. \textbf{(left)} Heatmap of the generalization gap $\mathbb{E}[\Delta\ell_i]$ as a function of $\varepsilon_i^2$ and $\bar{h}_{ii}$, estimated with Monte-Carlo.  \textbf{(middle)} Theoretical false negative rate (FNR) $\beta_i$ of the optimal attack on the point $i$ as a function of the false positive rate (FPR) $\alpha_i$, for different values of the structural error $\varepsilon_i$, with $\bar{h}_{ii} = 1$ and $\sigma^2 = 1$. \textbf{(right)} Same as (middle) but with fixed $\varepsilon_i = 1$ and $\bar{h}_{ii}$ varying.}
    \label{fig:toy_model}
\end{figure}

Figure~\ref{fig:toy_model} illustrates Corollary~\ref{cor:optimal_attack} and Proposition~\ref{prop:gap} using Monte-Carlo simulations. The left panel shows how the expected generalization gap grows with the true residual $\varepsilon_i^2$ and the leverage score $\bar{h}_{ii}$. The middle and right panels present the trade-off curves of the optimal MIA for varying $\bar{h}_{ii}$ and $\varepsilon_i^2$, respectively, confirming that the attack's statistical power increases monotonically in both parameters.

Proofs for all the results presented in this section are deferred to Appendix~\ref{app:sec3_proofs}.

\section{Generalization to Non-Linear Models}
\label{sec:gen_lin}
In the linear regression case, we could derive the exact value of the loss gap $\ell_{-i} - \ell_i$ without retraining. This is because the OLS estimator has a closed-form solution and the loss is quadratic. For more general models, such as generalized linear models or deep neural networks, we do not have closed-form solutions. In the following, we discuss how common techniques from influence functions and Newton steps can be used to estimate the loss gap without retraining, and how they relate to the previous analysis in both the linear regression and the logistic regression case.

\subsection{Influence Functions and Newton Step}

\paragraph{Influence Function.}  Influence functions~\citep{hampelInfluence,kohUnderstandingBlackboxPredictions2017} provide a way to estimate the effect of removing a training point $z_i$ on the model parameters $\hat{\theta}$ without retraining. Formally, the influence of $z_i$ is defined as:
\begin{equation*}
I(z_i) = -\Big[\frac{1}{n}\sum_{j=1}^{n} \nabla_\theta^2 \ell(\hat{\theta}, z_j)\Big]^{-1} \nabla_\theta \ell(\hat{\theta}, z_i).
\end{equation*}
Using a first-order Taylor expansion, this defines our first estimator for the loss gap, which we call the \emph{Influence Function estimator}:
\begin{equation}
\label{eq:inf}
\widehat{\Delta\ell}^{\mathrm{IF}}_i = \nabla_\theta \ell(\hat{\theta}, z_i)^\top \Big[\sum_{j=1}^{n} \nabla_\theta^2 \ell(\hat{\theta}, z_j)\Big]^{-1} \nabla_\theta \ell(\hat{\theta}, z_i).
\end{equation}

\paragraph{Newton Step.} Alternatively, one can approximate the loss gap by taking a single Newton step on the leave-one-out objective starting from $\hat{\theta}$. Using a first-order Taylor expansion on the loss, this defines our second estimator, which we call the \emph{Newton Step estimator}:
\begin{equation*}
\widehat{\Delta\ell}^{\mathrm{NS}}_i = - \nabla_\theta \ell(\hat{\theta}, z_i)^\top \Big[\frac{1}{n}\sum_{j\neq i} \nabla_\theta^2 \ell(\hat{\theta}, z_j)\Big]^{-1}  \frac{1}{n}\sum_{j \neq i} \nabla_\theta \ell(\hat{\theta}, z_j).
\end{equation*}
Notably, when the model has fully converged and minimizes the empirical risk (i.e., $\frac{1}{n}\sum_{j=1}^n \nabla_\theta \ell(\hat{\theta}, z_j) = 0$), it follows that $\frac{1}{n}\sum_{j \neq i} \nabla_\theta \ell(\hat{\theta}, z_j) = -\frac{1}{n}\nabla_\theta \ell(\hat{\theta}, z_i)$. Under this convergence assumption, the Newton Step estimator simplifies to:
\begin{equation}
\label{eq:newton}
\widehat{\Delta\ell}^{\mathrm{NS}}_i = \nabla_\theta \ell(\hat{\theta}, z_i)^\top \Big[\sum_{j\neq i} \nabla_\theta^2 \ell(\hat{\theta}, z_j)\Big]^{-1} \nabla_\theta \ell(\hat{\theta}, z_i),
\end{equation}
a form also known as the \emph{Approximate Leave-One-Out} (ALO) estimator in the statistics literature and used in approximate unlearning methods~\citep{certifiedGuo2020,sekhari2021rememberwantforgetalgorithms}. 

Remarkably, $\widehat{\Delta\ell}^{\mathrm{IF}}_i$ and $\widehat{\Delta\ell}^{\mathrm{NS}}_i$ coincide up to a rank-one correction of the Hessian, despite being derived from distinct principles. In the following section, we derive closed-form expressions for these two estimators in the context of linear and logistic regression, and show how they can serve as scalable proxies for the loss gap to evaluate vulnerability to membership inference attacks.

\subsection{Analysis for Linear Regression and Logistic Regression}

We now derive the exact expressions of $\widehat{\Delta\ell}^{\mathrm{IF}}_i$ and $\widehat{\Delta\ell}^{\mathrm{NS}}_i$ for linear and logistic regression. These are not further approximations: they are the direct, closed-form instantiations of Equations~\eqref{eq:inf} and~\eqref{eq:newton} for the respective model classes.

\paragraph{Linear Regression.}
In linear regression, the OLS estimator $\hat{\theta} = (X^\top X)^{-1} X^\top Y$ is available in closed form and the loss is quadratic, so both estimators admit exact closed-form expressions. Letting $e_i$ denote the residual and $h_{ii}$ the leverage score, the two estimators yield:
\begin{equation}
    \label{eq:lin_reg_inf}
    \widehat{\Delta\ell}^{\mathrm{IF}}_i = 2 e_i^2 h_{ii}, \qquad
    \widehat{\Delta\ell}^{\mathrm{NS}}_i = \frac{2 e_i^2 h_{ii}}{1 - h_{ii}}.
\end{equation}
The Newton Step estimator introduces a leverage correction $(1-h_{ii})^{-1}$, bringing it closer to the exact leave-one-out formula: $ \Delta\ell_i = e_i^2(2h_{ii} - h_{ii}^2)/(1-h_{ii})^2 $, particularly for high-leverage points.

\paragraph{Logistic Regression.}
In binary logistic regression, with $\hat{p}_i = \sigma(\hat{\theta}^\top x_i)$, the two estimators yield:
\begin{equation}
    \label{eq:log_reg_inf}
    \widehat{\Delta\ell}^{\mathrm{IF}}_i = \frac{(y_{i}-\hat{p}_{i})^{2}}{w_{i}}\, h_{ii},
    \qquad
    \widehat{\Delta\ell}^{\mathrm{NS}}_i = \frac{(y_{i}-\hat{p}_{i})^{2}}{w_{i}}\, \frac{h_{ii}}{1 - h_{ii}},
\end{equation}
where $w_i = \hat{p}_i (1 - \hat{p}_i)$ and $h_{ii} = w_i x_i^\top (X^\top W X)^{-1} x_i$ is the weighted leverage score, with $W = \operatorname{diag}(w_j)_{j=1,\dots,n}$. This is the logistic regression analogue of the OLS leverage score~\citep{pregibonLogReg}: in both cases, $h_{ii}$ captures the local sensitivity of the fitted response to the target, $\partial \hat{y}_i / \partial y_i$ in linear regression and $\partial \hat{p}_i / \partial y_i$ in logistic regression. As in Equation~\eqref{eq:lin_reg_inf}, the Newton Step estimator applies a leverage correction relative to the Influence Function estimator.

We generalize both of these results for the multivariate case in Appendix~\ref{app:sec4_multivariate}. Motivated by these findings, as well as our theoretical analysis in Proposition~\ref{prop:gap}, we propose to use these estimates as proxies for vulnerability assessment in deep neural networks. In practice, we consider only the linear last layer of the network, and apply the estimators defined in equations~\eqref{eq:lin_reg_inf} and~\eqref{eq:log_reg_inf} (or their multivariate counterparts) depending on the loss used for the training (cross-entropy for classification, quadratic for regression).

\subsection{Differences Between the two Approaches}

In both linear and logistic regression, the two estimators differ by a single 
multiplicative factor: 
$\widehat{\Delta\ell}^{\mathrm{NS}}_i / \widehat{\Delta\ell}^{\mathrm{IF}}_i 
= 1/(1-h_{ii})$. At low leverage they therefore agree to leading order. The 
correction matters only as $h_{ii} \to 1$, a regime reached, for instance, when $x_i$ 
is orthogonal to every other row of $X$, in which case $h_{ii} = 1$ exactly. We have: 
\begin{equation*}
    \frac{\widehat{\Delta\ell}^{\mathrm{IF}}_i}{\Delta\ell_i}  \underset{h_{ii}\to 1}{\sim} 2(1-h_{ii})^2,
    \qquad
    \frac{\widehat{\Delta\ell}^{\mathrm{NS}}_i}{\Delta\ell_i} \underset{h_{ii}\to 1}{\sim} 2(1-h_{ii}).
\end{equation*}

As $h_{ii} \to 1$, both estimators vanish relative to $\Delta\ell_i$, but at different rates: the Influence Function estimator captures only an $O((1-h_{ii})^2)$ fraction of the true loss change, whereas the Newton Step estimator captures an $O(1-h_{ii})$ fraction. The Newton Step therefore tracks the per-sample generalization gap substantially better than the Influence Function at high-leverage points.

Proofs for all the results presented in this section are deferred to Appendix~\ref{app:sec4_proofs}.

\section{Experiments}
\label{sec:exp}

\subsection{Experimental Setup}
\label{sec:exp_setup}

\begin{figure}[t]
    \centering
    \includegraphics{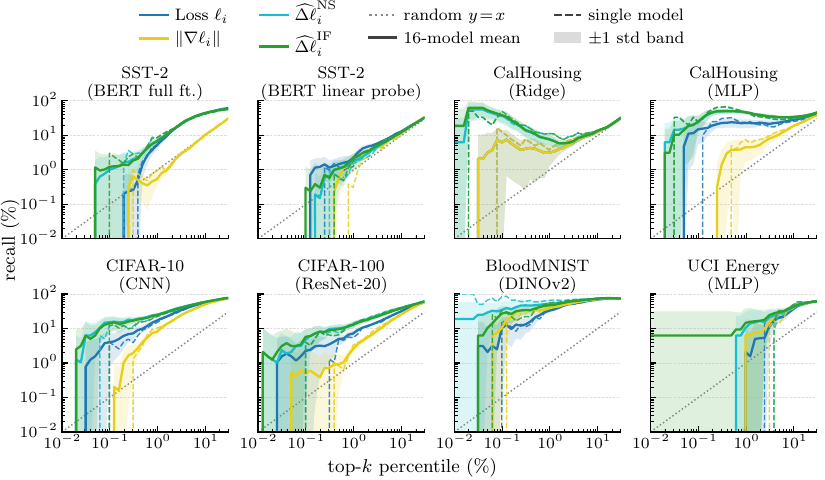} 
    \caption{Recall of the top-$k$\% LiRA's attack most vulnerable samples (ranked by the Average Success Rate) when keeping the top-$k$\% for each of the four surrogate: the loss $\ell_i$, the gradient norm $\|\nabla\ell_i\|$, the influence estimate $\widehat{\Delta\ell}^{\mathrm{IF}}_i$, and the Newton estimate $\widehat{\Delta\ell}^{\mathrm{NS}}_i$. Axes are log-log; the dotted diagonal $y=x$ is the random baseline. Solid lines are means over $16$ target models with $\pm 1$ std bands; dashed-lines show a single representative model.}
    \label{fig:topk}
\end{figure}

We evaluate our two surrogates $\widehat{\Delta\ell}^{\mathrm{IF}}_i$ and $\widehat{\Delta\ell}^{\mathrm{NS}}_i$ for per-sample membership-inference vulnerability across a diverse set of modalities, architectures and learning regimes.

\paragraph{Datasets and architectures.} We cover both classification and regression on tabular, vision and text data:
(i) \emph{CIFAR-10} with a small CNN;
(ii) \emph{CIFAR-100} with ResNet-20;
(iii) \emph{BloodMNIST} with a DINOv2 ViT-small classifier~\citep{oquab2023dinov2}, trained with a linear probe on top of frozen features;
(iv) \emph{SST-2} with a BERT~\citep{devlin2019bert} classifier in two regimes: full fine-tuning of \texttt{bert\_uncased\_L-2\_H-128\_A-2} and a linear probe on top of a frozen \texttt{bert-base-uncased};
(v) \emph{California Housing} with both a closed-form ridge regression and a deep MLP;
(vi) \emph{UCI Energy} with a tabular MLP. 
For each task we use the standard train split, randomly partitioned in half between members and non-members ($N_\text{train}=N/2$) with a per-model seed, following the LiRA shadow-model protocol~\citep{carliniMembershipInferenceAttacks2022}.

\paragraph{Shadow and target models.} For every dataset/architecture pair we train $200$ shadow (reference) models with i.i.d. random member/non-member splits, plus $16$ independent target models used to compute the per-sample vulnerability surrogates. 
Optimizers, learning rates, batch sizes, weight decay and number of epochs follow standard recipes for each architecture (AdamW for vision and text, SGD-momentum for ResNet-20, Adam for the regression MLPs); the exact hyper-parameters are listed in Appendix~\ref{app:arch_hyperparams} and in the release scripts.

\paragraph{Attacks and surrogates.} We use LiRA~\citep{carliniMembershipInferenceAttacks2022} as the reference per-sample attack, since it achieved the highest Attack Success Rate (ASR) in our experiments; we compute its per-sample ASR by averaging over the 200 reference models. We compare four \emph{retraining-free} surrogates of vulnerability, all computed from a \emph{single} target model and \emph{without} shadow models: the training loss $\ell_i$, the gradient-norm $\|\nabla\ell_i\|$, the influence-function estimate $\widehat{\Delta\ell}^{\mathrm{IF}}_i$ and the Newton-step estimate $\widehat{\Delta\ell}^{\mathrm{NS}}_i$. 

For deep networks, both variants are restricted to the last layer, which keeps their cost essentially negligible. In Appendix~\ref{app:add_exp_results}, we extend this analysis to include comparisons with both the RMIA attack~\citep{zarifzadehLowCostHighPowerMembership2024} and the original shadow-model attack~\citep{shokriMembershipInferenceAttacks2017}. For two settings, CIFAR10 with a CNN and California Housing with an MLP, we additionally computed the full-network influence function estimate of the loss, without the restriction to the last layer, i.e., $\nabla_\theta \ell_i^\top H_{\hat\theta}^{-1}\nabla_\theta \ell_i$. We compute it using the inverse Hessian Products \citep{hessianProduct} and the Conjugate Gradient algorithm \citep{hestenesMethodsConjugateGradients1952}. More details can be found in Appendix~\ref{app:full_inf}.

\paragraph{Metrics.} 
Following standard per-sample auditing practice~\citep{shokriMembershipInferenceAttacks2017, carliniMembershipInferenceAttacks2022, zarifzadehLowCostHighPowerMembership2024}, we evaluate surrogates based on top-$k$\% recall: the fraction of the most vulnerable points (highest ASR) correctly identified within the surrogate's top-$k$\% rankings. The Spearman rank correlation is provided in Appendix~\ref{app:lira_full_results} for reference, but we find it less informative in our context as it is dominated by the high density of points near the 50\% ASR (random guess), where surrogates provide little discriminative signal.

All experiments were run on an Nvidia-V100 GPU cluster; the full pipeline, training, statistics, attacks, scores and figures, is reproducible from the released code through the per-dataset launch scripts, and will be released publicly upon acceptance of the paper.
    
\subsection{Results}

\begin{table}[t]
\centering
\small
\renewcommand{\arraystretch}{1.2}
\setlength{\tabcolsep}{4pt}
\caption{Recall (\%) of the true LiRA top-1\,\% when keeping the top-5\,\% by each surrogate. Mean $\pm$ std across target models. Higher is better; \textbf{bold} = best per dataset.}
\label{tab:proxy-recall-at-5pct-lira}
\resizebox{\textwidth}{!}{%
\begin{tabular}{ll c c c c}
\toprule
& & \multicolumn{4}{c}{Recall of Top 1\% @5\% (in \%)} \\
\cmidrule(lr){3-6}
Dataset & Model & Loss $\ell_i$ & $\|\nabla\ell_i\|$ & $\widehat{\Delta\ell}^{\mathrm{NS}}_i$ & $\widehat{\Delta\ell}^{\mathrm{IF}}_i$ \\
\midrule
\multirow{2}{*}{SST-2} & BERT, full ft. & $28.2 \pm 2.4$\,\% & $5.3 \pm 1.1$\,\% & $28.8 \pm 2.6$\,\% & $\mathbf{28.8} \pm \mathbf{2.6}$\,\% \\
& BERT, linear probe & $\mathbf{10.7} \pm \mathbf{1.6}$\,\% & $7.2 \pm 1.9$\,\% & $8.2 \pm 2.5$\,\% & $8.9 \pm 1.6$\,\% \\
\addlinespace
\multirow{2}{*}{CalHousing} & Ridge regression & $12.3 \pm 2.8$\,\% & $12.3 \pm 2.8$\,\% & $\mathbf{13.3} \pm \mathbf{2.6}$\,\% & $13.2 \pm 2.5$\,\% \\
& MLP & $52.2 \pm 6.7$\,\% & $22.5 \pm 6.9$\,\% & $74.9 \pm 7.4$\,\% & $\mathbf{75.5} \pm \mathbf{7.0}$\,\% \\
\addlinespace
CIFAR-10 & CNN & $53.7 \pm 2.5$\,\% & $36.8 \pm 2.5$\,\% & $\mathbf{62.8} \pm \mathbf{2.1}$\,\% & $62.2 \pm 2.2$\,\% \\
\addlinespace
CIFAR-100 & ResNet-20 & $23.5 \pm 1.5$\,\% & $12.3 \pm 1.8$\,\% & $\mathbf{36.8} \pm \mathbf{5.1}$\,\% & $35.2 \pm 4.6$\,\% \\
\addlinespace
BloodMNIST & DINOv2 & $74.3 \pm 6.4$\,\% & $82.3 \pm 3.9$\,\% & $\mathbf{90.4} \pm \mathbf{12.7}$\,\% & $87.1 \pm 9.0$\,\% \\
\addlinespace
UCI Energy & MLP & $12.5 \pm 16.7$\,\% & $14.6 \pm 21.0$\,\% & $33.3 \pm 34.4$\,\% & $\mathbf{39.6} \pm \mathbf{32.7}$\,\% \\
\bottomrule
\end{tabular}}
\end{table}

We present the recall @5\% of the true LiRA top-1\% (measured by Average Success Rate) in Table~\ref{tab:proxy-recall-at-5pct-lira}. Figure~\ref{fig:topk} illustrates the performance of these surrogates for multiple recall percentages. We also present more extensive results for different percentages of recall in Appendix~\ref{app:add_exp_results}, which show similar trends. 

Across all eight settings,  our two surrogates $\widehat{\Delta\ell}^{\mathrm{IF}}_i$ and $\widehat{\Delta\ell}^{\mathrm{NS}}_i$, consistently recover the samples LiRA flags as most vulnerable, well above the random baseline, often by one to two orders of magnitude.

They also surpass the loss baseline on seven out of eight settings, with recall@$1\%$ improving by more than $20\%$ on some datasets. The sole exception is the BERT linear probe, where the loss baseline marginally outperforms our method; we hypothesize that this setting, characterized by simple training dynamics and few vulnerable points, is one in which the raw loss already provides a strong signal. We further observe high variance in recall@$1\%$ across all surrogates on the UCI Energy dataset, which we attribute to the mismatch between the small dataset size ($N=768$) and the high-dimensional tabular MLP ($p=512$). At this scale, recall@$1\%$ is evaluated over only 7 points, making the metric inherently noisy.

Interestingly, our leverage-based surrogates have greater difficulty recovering the most vulnerable points for RMIA and Shokri classifier-based attacks (Appendix~\ref{app:rmia_shokri_full_results}). For RMIA, our method still outperforms the loss baseline in all cases, although by a narrower margin.
For the classifier-based approach, this is likely because the attack learns a non-parametric decision boundary that is not well captured by the leverage-based scores, which are derived from a local quadratic approximation of the loss landscape. 
Finally, the full-network influence function proved worse in both speed and recall compared to our last-layer approach (see Appendix~\ref{app:full_inf}).

Overall, these scores yield a shadow-free, single-model proxy that approaches LiRA's per-point ASR ranking at a fraction of its computational cost. Scoring all points with our leverage-based surrogates on a single target model is $1{,}000\times$ to $10{,}000\times$ faster than training the 200 shadow models required by LiRA, excluding the target-model training cost, which is shared by both approaches. Detailed timings for each surrogate and attack are reported in the Appendix \ref{app:compute_time}.

\section{Conclusion}
We proposed a retraining-free framework for per-sample membership inference vulnerability assessment. In Gaussian linear regression, we derived a closed-form decomposition of the expected leave-one-out gap into a response-free population leverage term and a response-dependent structural error. We extended this view to deep networks via influence-function and Newton-step surrogates restricted to the last layer, both computable from a single trained model. Across eight dataset/architecture pairs, our surrogates outperform loss and gradient-norm baselines at recovering LiRA's most vulnerable points, at a fraction of the cost. 

Limitations remain: the theory is exact only in the linear case, the last-layer restriction overlooks earlier feature learning, and the surrogates track LiRA more closely than RMIA or classifier-based attacks. Natural extensions include layer-wise or kernel-based leverage scores, integration into DP-SGD training as a canary-selection or per-sample weighting tool, and a finer characterization of which memorization signals escape local quadratic approximations.

\newpage

\bibliographystyle{plainnat}
\bibliography{paper}

\begin{thebibliography}{35}
\providecommand{\natexlab}[1]{#1}
\providecommand{\url}[1]{\texttt{#1}}
\expandafter\ifx\csname urlstyle\endcsname\relax
  \providecommand{\doi}[1]{doi: #1}\else
  \providecommand{\doi}{doi: \begingroup \urlstyle{rm}\Url}\fi

\bibitem[Abadi et~al.(2016)Abadi, Chu, Goodfellow, McMahan, Mironov, Talwar,
  and Zhang]{abadiDeepLearningDifferential2016}
Mart{\'i}n Abadi, Andy Chu, Ian Goodfellow, H.~Brendan McMahan, Ilya Mironov,
  Kunal Talwar, and Li~Zhang.
\newblock {Deep Learning with Differential Privacy}.
\newblock In \emph{Proceedings of the 2016 {{ACM SIGSAC Conference}} on
  {{Computer}} and {{Communications Security}}}, pages 308--318, October 2016.

\bibitem[Alaggan et~al.(2017)Alaggan, Gambs, and
  Kermarrec]{alaggan2015heterogeneousdifferentialprivacy}
Mohammad Alaggan, Sébastien Gambs, and Anne-Marie Kermarrec.
\newblock Heterogeneous differential privacy.
\newblock \emph{Journal of Privacy and Confidentiality}, 7\penalty0 (2), Jan.
  2017.

\bibitem[Allen(1974)]{allenLeaveOneOut}
David~M. Allen.
\newblock The relationship between variable selection and data agumentation and
  a method for prediction.
\newblock \emph{Technometrics}, 16\penalty0 (1):\penalty0 125--127, 1974.
\newblock ISSN 00401706.

\bibitem[Basu et~al.(2021)Basu, Pope, and Feizi]{basu2021influence}
S~Basu, P~Pope, and S~Feizi.
\newblock Influence functions in deep learning are fragile.
\newblock In \emph{International Conference on Learning Representations
  (ICLR)}, 2021.

\bibitem[Belsley et~al.(1980)Belsley, Kuh, and Welsch]{regressionOutliers}
David~A Belsley, Edwin Kuh, and Roy~E Welsch.
\newblock \emph{Regression Diagnostics: Identifying Influential Data and
  Sources of Collinearity}, chapter~2, pages 6--84.
\newblock John Wiley \& Sons, Ltd, 1980.

\bibitem[Carlini et~al.(2019)Carlini, Liu, Erlingsson, Kos, and
  Song]{carliniSecretSharerEvaluating2019}
Nicholas Carlini, Chang Liu, {\'U}lfar Erlingsson, Jernej Kos, and Dawn Song.
\newblock {The secret sharer: Evaluating and testing unintended memorization in
  neural networks}.
\newblock In \emph{28th USENIX security symposium (USENIX security 19)}, pages
  267--284, 2019.

\bibitem[Carlini et~al.(2022{\natexlab{a}})Carlini, Chien, Nasr, Song, Terzis,
  and Tramer]{carliniMembershipInferenceAttacks2022}
Nicholas Carlini, Steve Chien, Milad Nasr, Shuang Song, Andreas Terzis, and
  Florian Tramer.
\newblock {Membership inference attacks from first principles}.
\newblock In \emph{2022 IEEE symposium on security and privacy (SP)}, pages
  1897--1914. IEEE, 2022{\natexlab{a}}.

\bibitem[Carlini et~al.(2022{\natexlab{b}})Carlini, Jagielski, Zhang, Papernot,
  Terzis, and Tramer]{carliniPrivacyOnionEffect2022a}
Nicholas Carlini, Matthew Jagielski, Chiyuan Zhang, Nicolas Papernot, Andreas
  Terzis, and Florian Tramer.
\newblock The privacy onion effect: Memorization is relative.
\newblock \emph{Advances in Neural Information Processing Systems},
  35:\penalty0 13263--13276, 2022{\natexlab{b}}.

\bibitem[Cohen and Giryes(2024)]{cohenMembershipInferenceAttack2024}
Gilad Cohen and Raja Giryes.
\newblock Membership {Inference} {Attack} {Using} {Self} {Influence}
  {Functions}.
\newblock In \emph{2024 {IEEE}/{CVF} {Winter} {Conference} on {Applications} of
  {Computer} {Vision} ({WACV})}, pages 4880--4889, Waikoloa, HI, USA, January
  2024. IEEE.
\newblock ISBN 979-8-3503-1892-0.

\bibitem[Devlin et~al.(2019)Devlin, Chang, Lee, and Toutanova]{devlin2019bert}
Jacob Devlin, Ming-Wei Chang, Kenton Lee, and Kristina Toutanova.
\newblock Bert: Pre-training of deep bidirectional transformers for language
  understanding.
\newblock In \emph{Proceedings of the 2019 conference of the North American
  chapter of the association for computational linguistics: human language
  technologies, volume 1 (long and short papers)}, pages 4171--4186, 2019.

\bibitem[Dwork(2006)]{DifferentialPrivacyDwork}
C.~Dwork.
\newblock \emph{{Differential privacy}}, volume 2006.
\newblock ICALP, 2006.

\bibitem[Feldman(2020)]{feldman2020does}
Vitaly Feldman.
\newblock {Does learning require memorization? a short tale about a long tail}.
\newblock In \emph{Proceedings of the 52nd annual ACM SIGACT symposium on
  theory of computing}, pages 954--959, 2020.

\bibitem[Feldman and Zhang(2020)]{feldman2020neural}
Vitaly Feldman and Chiyuan Zhang.
\newblock {What neural networks memorize and why: Discovering the long tail via
  influence estimation}.
\newblock \emph{Advances in Neural Information Processing Systems},
  33:\penalty0 2881--2891, 2020.

\bibitem[Guo et~al.(2020)Guo, Goldstein, Hannun, and Van
  Der~Maaten]{certifiedGuo2020}
Chuan Guo, Tom Goldstein, Awni Hannun, and Laurens Van Der~Maaten.
\newblock Certified data removal from machine learning models.
\newblock In Hal~Daumé III and Aarti Singh, editors, \emph{Proceedings of the
  37th International Conference on Machine Learning}, volume 119 of
  \emph{Proceedings of Machine Learning Research}, pages 3832--3842. PMLR,
  13--18 Jul 2020.

\bibitem[Hampel(1974)]{hampelInfluence}
Frank~R. Hampel.
\newblock The influence curve and its role in robust estimation.
\newblock \emph{Journal of the American Statistical Association}, 69\penalty0
  (346):\penalty0 383--393, 1974.
\newblock ISSN 01621459, 1537274X.

\bibitem[Hestenes and Stiefel(1952)]{hestenesMethodsConjugateGradients1952}
M.R. Hestenes and E.~Stiefel.
\newblock {Methods of conjugate gradients for solving linear systems}.
\newblock \emph{Journal of Research of the National Bureau of Standards},
  49\penalty0 (6):\penalty0 409, December 1952.
\newblock ISSN 0091-0635.

\bibitem[Jagielski et~al.(2020)Jagielski, Ullman, and
  Oprea]{jagielskiAuditingDifferentiallyPrivate2020}
Matthew Jagielski, Jonathan Ullman, and Alina Oprea.
\newblock Auditing differentially private machine learning: How private is
  private sgd?
\newblock \emph{Advances in Neural Information Processing Systems},
  33:\penalty0 22205--22216, 2020.

\bibitem[Koh and Liang(2017)]{kohUnderstandingBlackboxPredictions2017}
Pang~Wei Koh and Percy Liang.
\newblock {Understanding {Black}-box {Predictions} via {Influence}
  {Functions}}.
\newblock In \emph{Proceedings of the 34th {International} {Conference} on
  {Machine} {Learning}}, pages 1885--1894. PMLR, July 2017.

\bibitem[Lecuyer et~al.(2019)Lecuyer, Atlidakis, Geambasu, Hsu, and
  Jana]{lecuyer2019certified}
Mathias Lecuyer, Vaggelis Atlidakis, Roxana Geambasu, Daniel Hsu, and Suman
  Jana.
\newblock Certified robustness to adversarial examples with differential
  privacy.
\newblock In \emph{2019 IEEE symposium on security and privacy (SP)}, pages
  656--672. IEEE, 2019.

\bibitem[Muirhead(1982)]{multivariateStatTheory}
Robb~J. Muirhead.
\newblock \emph{The Multivariate Linear Model}, chapter~10, pages 429--525.
\newblock John Wiley \& Sons, Ltd, 1982.
\newblock ISBN 9780470316559.

\bibitem[Nasr et~al.(2021)Nasr, Songi, Thakurta, Papernot, and
  Carlin]{nasrAdversaryInstantiationLower2021}
Milad Nasr, Shuang Songi, Abhradeep Thakurta, Nicolas Papernot, and Nicholas
  Carlin.
\newblock {Adversary instantiation: Lower bounds for differentially private
  machine learning}.
\newblock In \emph{2021 IEEE Symposium on security and privacy (SP)}, pages
  866--882. IEEE, 2021.

\bibitem[Neyman and Pearson(1933)]{neymanLikelihood}
Jerzy Neyman and Egon~Sharpe Pearson.
\newblock Ix. on the problem of the most efficient tests of statistical
  hypotheses.
\newblock \emph{Philosophical Transactions of the Royal Society of London,
  Series A: Containing Papers of a Mathematical or Physical Character},
  231\penalty0 (694-706):\penalty0 289--337, 02 1933.

\bibitem[Oquab et~al.(2023)Oquab, Darcet, Moutakanni, Vo, Szafraniec, Khalidov,
  Fernandez, Haziza, Massa, El-Nouby, et~al.]{oquab2023dinov2}
Maxime Oquab, Timoth{\'e}e Darcet, Th{\'e}o Moutakanni, Huy Vo, Marc
  Szafraniec, Vasil Khalidov, Pierre Fernandez, Daniel Haziza, Francisco Massa,
  Alaaeldin El-Nouby, et~al.
\newblock Dinov2: Learning robust visual features without supervision.
\newblock \emph{arXiv preprint arXiv:2304.07193}, 2023.

\bibitem[Pearlmutter(1994)]{hessianProduct}
Barak~A. Pearlmutter.
\newblock Fast exact multiplication by the hessian.
\newblock \emph{Neural Computation}, 6\penalty0 (1):\penalty0 147--160, 01
  1994.

\bibitem[Pregibon(1981)]{pregibonLogReg}
Daryl Pregibon.
\newblock Logistic regression diagnostics.
\newblock \emph{The Annals of Statistics}, 9\penalty0 (4):\penalty0 705--724,
  1981.

\bibitem[Salem et~al.(2018)Salem, Zhang, Humbert, Berrang, Fritz, and
  Backes]{salem2018ml}
Ahmed Salem, Yang Zhang, Mathias Humbert, Pascal Berrang, Mario Fritz, and
  Michael Backes.
\newblock Ml-leaks: Model and data independent membership inference attacks and
  defenses on machine learning models.
\newblock \emph{arXiv preprint arXiv:1806.01246}, 2018.

\bibitem[Sekhari et~al.(2021)Sekhari, Acharya, Kamath, and
  Suresh]{sekhari2021rememberwantforgetalgorithms}
Ayush Sekhari, Jayadev Acharya, Gautam Kamath, and Ananda~Theertha Suresh.
\newblock Remember what you want to forget: Algorithms for machine unlearning.
\newblock In M.~Ranzato, A.~Beygelzimer, Y.~Dauphin, P.S. Liang, and J.~Wortman
  Vaughan, editors, \emph{Advances in Neural Information Processing Systems},
  volume~34, pages 18075--18086. Curran Associates, Inc., 2021.

\bibitem[Shokri et~al.(2017)Shokri, Stronati, Song, and
  Shmatikov]{shokriMembershipInferenceAttacks2017}
Reza Shokri, Marco Stronati, Congzheng Song, and Vitaly Shmatikov.
\newblock {Membership inference attacks against machine learning models}.
\newblock In \emph{2017 IEEE symposium on security and privacy (SP)}, pages
  3--18. IEEE, 2017.

\bibitem[Song and Mittal(2021)]{song2021systematic}
Liwei Song and Prateek Mittal.
\newblock Systematic evaluation of privacy risks of machine learning models.
\newblock In \emph{30th USENIX security symposium (USENIX security 21)}, pages
  2615--2632, 2021.

\bibitem[Suri et~al.(2024)Suri, Zhang, and Evans]{suri2024parameters}
Anshuman Suri, Xiao Zhang, and David Evans.
\newblock Do parameters reveal more than loss for membership inference?
\newblock \emph{Transactions on Machine Learning Research}, 2024.

\bibitem[Tan et~al.(2022)Tan, Mason, Javadi, and Baraniuk]{tan2022parameters}
Jasper Tan, Blake Mason, Hamid Javadi, and Richard Baraniuk.
\newblock Parameters or privacy: A provable tradeoff between
  overparameterization and membership inference.
\newblock \emph{Advances in Neural Information Processing Systems},
  35:\penalty0 17488--17500, 2022.

\bibitem[Wang(2018)]{wang2018perinstancedifferentialprivacy}
Yu-Xiang Wang.
\newblock Per-instance differential privacy, 2018.
\newblock URL \url{https://arxiv.org/abs/1707.07708}.

\bibitem[Yeom et~al.(2018)Yeom, Giacomelli, Fredrikson, and
  Jha]{yeomPrivacyRiskMachine2018}
Samuel Yeom, Irene Giacomelli, Matt Fredrikson, and Somesh Jha.
\newblock {Privacy risk in machine learning: Analyzing the connection to
  overfitting}.
\newblock In \emph{2018 IEEE 31st computer security foundations symposium
  (CSF)}, pages 268--282. IEEE, 2018.

\bibitem[Zarifzadeh et~al.(2024)Zarifzadeh, Liu, and
  Shokri]{zarifzadehLowCostHighPowerMembership2024}
Sajjad Zarifzadeh, Philippe Liu, and Reza Shokri.
\newblock {Low-Cost High-Power Membership Inference Attacks}.
\newblock In Ruslan Salakhutdinov, Zico Kolter, Katherine Heller, Adrian
  Weller, Nuria Oliver, Jonathan Scarlett, and Felix Berkenkamp, editors,
  \emph{Proceedings of the 41st International Conference on Machine Learning}.
  PMLR, 21--27 Jul 2024.

\bibitem[Zhang et~al.(2017)Zhang, Bengio, Hardt, Recht, and
  Vinyals]{zhang2017understanding}
Chiyuan Zhang, Samy Bengio, Moritz Hardt, Benjamin Recht, and Oriol Vinyals.
\newblock {Understanding deep learning requires rethinking generalization}.
\newblock In \emph{International Conference on Learning Representations}, 2017.

\end{thebibliography}

\newpage
\appendix

\addcontentsline{toc}{part}{Appendix}

\etocsettocstyle{\section*{Table of Contents}}{}
\etocsetnexttocdepth{subsection}
\etocstandardlines
\localtableofcontents

\section{Omitted Proofs from Section~\ref{sec:theory}}
\label{app:sec3_proofs}

This appendix provides the rigorous proofs for the theoretical results characterizing the finite-sample generalization gap. Throughout these proofs, we maintain the conditioning on the fixed target observation $(x_i, y_i)$, treating $x_i$ and $y_i$ as deterministic constants. The randomness originates entirely from $X_{-i}$ and $Y_{-i}$.

\subsection{Proof of Lemma~\ref{lem:leverage-residuals} (Joint Law of Leverage Scores and Residuals)}
\begin{proof}
We first derive the distribution of the empirical leverage score $h_{ii}$. With $G_{-i} = X_{-i}^\top X_{-i}$, the full Gram matrix  can be decomposed as $X^\top X = G_{-i} + x_i x_i^\top$. The empirical leverage score for the $i$-th observation is $h_{ii} = x_i^\top (X^\top X)^{-1} x_i$. 

Applying the Sherman-Morrison formula to invert the rank-one update:
\begin{equation*}
    (G_{-i} + x_i x_i^\top)^{-1} = G_{-i}^{-1} - \frac{G_{-i}^{-1} x_i x_i^\top G_{-i}^{-1}}{1 + x_i^\top G_{-i}^{-1} x_i}
\end{equation*}
Multiplying both sides by $x_i^\top$ on the left and $x_i$ on the right yields:
\begin{equation*}
    h_{ii} = x_i^\top G_{-i}^{-1} x_i - \frac{(x_i^\top G_{-i}^{-1} x_i)^2}{1 + x_i^\top G_{-i}^{-1} x_i}
\end{equation*}
Let $\tilde{h}_{ii} = x_i^\top G_{-i}^{-1} x_i$. Simplifying the fraction, we obtain $h_{ii} = \frac{\tilde{h}_{ii}}{1 + \tilde{h}_{ii}}$.

Because the rows of $X_{-i}$ are i.i.d. $\mathcal{N}(0, \Sigma)$, $G_{-i}$ follows a Wishart distribution $\mathcal{W}_p(\Sigma, n-1)$. A standard property of the Inverse-Wishart distribution states that for any fixed vector $x_i$, the quadratic form scales as an inverse chi-squared random variable. Specifically:
\begin{equation*}
    \frac{x_i^\top \Sigma^{-1} x_i}{x_i^\top G_{-i}^{-1} x_i} \sim \chi^2_{(n-1) - p + 1} = \chi^2_{n-p}
\end{equation*}
Let this random variable be denoted as $V$. Recalling that $\bar{h}_{ii} = x_i^\top \Sigma^{-1} x_i$, we have $V = \bar{h}_{ii} / \tilde{h}_{ii}$, which implies $\tilde{h}_{ii} = \bar{h}_{ii} / V$. Substituting this into our expression for $h_{ii}$ gives:
\begin{equation*}
    h_{ii} = \frac{\bar{h}_{ii} / V}{1 + \bar{h}_{ii} / V} = \frac{\bar{h}_{ii}}{V + \bar{h}_{ii}}
\end{equation*}
which concludes the first part of the lemma. 

We now derive the distribution of the leave-one-out residual $e_{-i}$. The leave-one-out OLS estimator is $\hat{\theta}_{-i} = G_{-i}^{-1} X_{-i}^\top Y_{-i}$. Under the true model, $Y_{-i} = X_{-i}\theta + \varepsilon_{-i}$, where $\varepsilon_{-i} \sim \mathcal{N}(0, \sigma^2 I_{n-1})$ is unconditionally independent of $X_{-i}$.
Substituting this yields $\hat{\theta}_{-i} = \theta + G_{-i}^{-1} X_{-i}^\top \varepsilon_{-i}$.

The leave-one-out residual is:
\begin{equation*}
    e_{-i} = y_i - x_i^\top \hat{\theta}_{-i} = (y_i - x_i^\top \theta) - x_i^\top G_{-i}^{-1} X_{-i}^\top \varepsilon_{-i} = \varepsilon_i - U
\end{equation*}
where $\varepsilon_i = y_i - x_i^\top \theta$ is a deterministic constant and $U = x_i^\top G_{-i}^{-1} X_{-i}^\top \varepsilon_{-i}$.

We now condition on $X_{-i}$. Since $\varepsilon_{-i} \perp\!\!\!\perp X_{-i}$, the conditional distribution of $\varepsilon_{-i}$ remains $\mathcal{N}(0, \sigma^2 I_{n-1})$. Because $U$ is a linear combination of a Gaussian vector, $U \mid X_{-i}$ is Gaussian. Its conditional moments are:
\begin{align}
    \mathbb{E}[U \mid X_{-i}] &= x_i^\top G_{-i}^{-1} X_{-i}^\top \mathbb{E}[\varepsilon_{-i}] = 0 \\
    \text{Var}(U \mid X_{-i}) &= x_i^\top G_{-i}^{-1} X_{-i}^\top (\sigma^2 I_{n-1}) X_{-i} G_{-i}^{-1} x_i = \sigma^2 x_i^\top G_{-i}^{-1} G_{-i} G_{-i}^{-1} x_i = \sigma^2 \tilde{h}_{ii}
\end{align}
Substituting $\tilde{h}_{ii} = \bar{h}_{ii} / V$, we have $U \mid X_{-i} \sim \mathcal{N}(0, \sigma^2 \bar{h}_{ii} / V)$. Let $Z = -U / (\sigma \sqrt{\bar{h}_{ii}} / \sqrt{V})$. Then $Z \mid X_{-i} \sim \mathcal{N}(0, 1)$. 

Crucially, because the conditional distribution of $Z$ given $X_{-i}$ is standard normal and does not depend on $X_{-i}$, $Z$ is unconditionally independent of $X_{-i}$. Since $V$ is entirely a function of $X_{-i}$, it strictly follows that $Z \perp\!\!\!\perp V$.
Thus, we can write the residual as $e_{-i} = \varepsilon_i + \sigma \frac{\sqrt{\bar{h}_{ii}}}{\sqrt{V}} Z$.

Finally, by standard regression update rules, the training residual is $e_i = e_{-i} / (1 + \tilde{h}_{ii})$. Squaring this yields the training loss:
\begin{equation*}
    \ell_i = e_i^2 = \left( \frac{e_{-i}}{1 + \bar{h}_{ii} / V} \right)^2 = e_{-i}^2 \left( \frac{V}{V + \bar{h}_{ii}} \right)^2
\end{equation*}
\end{proof}

\subsection{Proof of Proposition~\ref{prop:loss_distribution} (Per-sample Loss Distribution)}
\begin{proof}
The results follow directly from Lemma~\ref{lem:leverage-residuals} by writing that the training loss is 
\[
\ell_i = e_i^2 = e_{-i}^2 (1-h_{ii})^2
\] and the leave-one-out loss is $\ell_{-i} = e_{-i}^2$.

\end{proof}

\subsection{Proof of Corollary~\ref{cor:optimal_attack} (Optimal MIA Trade-off)}
\begin{proof}

Under the non-member hypothesis $\mathcal{H}_{\text{OUT}}$, the optimal decision rule rejects the null hypothesis if the observed loss is unusually small: $\ell_{\text{obs}, i} < \gamma_\alpha$. For a fixed False Positive Rate $\alpha$, the threshold is strictly determined by the non-member cumulative distribution function: $\alpha = \mathbb{P}(\ell_{\text{obs}, i} < \gamma_\alpha \mid \mathcal{H}_{\text{OUT}})$, yielding $\gamma_\alpha = F_{i, \text{OUT}}^{-1}(\alpha)$.

Under the member hypothesis $\mathcal{H}_{\text{IN}}$, the loss is scaled by $c_i^2$. The True Positive Rate is the probability that this scaled loss falls below the same threshold:
\begin{equation*}
    \text{TPR}(\alpha) = \mathbb{P}\left(c_i^2 \ell_{\text{obs}, i} < \gamma_\alpha \mid \mathcal{H}_{\text{OUT}}\right) = \mathbb{P}\left(\ell_{\text{obs}, i} < \frac{\gamma_\alpha}{c_i^2} \mid \mathcal{H}_{\text{OUT}}\right) = F_{i, \text{OUT}}\left( \frac{\gamma_\alpha}{c_i^2} \right)
\end{equation*}

We finish the proof by writing $\beta_i(\alpha) = \text{FNR}(\alpha) = 1- \text{TPR}(\alpha)$
\end{proof}

\subsection{Proof of Proposition~\ref{prop:gap} (Expected Generalization Gap)}
\begin{proof}
The generalization gap is $\Delta \ell_i =   \ell_{-i} - \ell_i$. Using the relationship $\ell_i = e_{-i}^2 [V / (V + \bar{h}_{ii})]^2$, we can factor out $e_{-i}^2$:
\begin{equation*}
    \Delta \ell_i = e_{-i}^2 \left( 1 - \frac{V^2}{(V + \bar{h}_{ii})^2} \right) = e_{-i}^2 \left( \frac{(V + \bar{h}_{ii})^2 - V^2}{(V + \bar{h}_{ii})^2} \right) = e_{-i}^2 \frac{\bar{h}_{ii}(2V + \bar{h}_{ii})}{(V + \bar{h}_{ii})^2}
\end{equation*}
To compute the expectation $\mathbb{E}[\Delta \ell_i \mid x_i, y_i]$, we leverage the independence of $Z$ and $V$ established in Lemma~\ref{lem:leverage-residuals}. We first take the inner expectation with respect to $Z$:
\begin{align}
    \mathbb{E}_Z[e_{-i}^2 \mid V] &= \mathbb{E}_Z \left[ \left( \varepsilon_i + \sigma \frac{\sqrt{\bar{h}_{ii}}}{\sqrt{V}} Z \right)^2 \right] \\
    &= \varepsilon_i^2 + 2\varepsilon_i\sigma\frac{\sqrt{\bar{h}_{ii}}}{\sqrt{V}}\mathbb{E}[Z] + \sigma^2\frac{\bar{h}_{ii}}{V}\mathbb{E}[Z^2] = \varepsilon_i^2 + \sigma^2\frac{\bar{h}_{ii}}{V}
\end{align}
Substituting this inner expectation into the gap yields the exact finite-sample formula:
\begin{equation*}
    \mathbb{E}[\Delta \ell_i \mid x_i, y_i] = \mathbb{E}_{V \sim \chi^2_{n-p}} \left[ \left( \varepsilon_i^2 + \sigma^2 \frac{\sqrt{\bar{h}_{ii}}}{V} \right) \frac{\bar{h}_{ii} (2V + \bar{h}_{ii})}{(V + \bar{h}_{ii})^2} \right]
\end{equation*}

To derive the asymptotic expansion, let $k = n - p$ and define $x = 1/V$. We expand the rational term as $x \to 0$ using the Maclaurin series for $(1 + \bar{h}_{ii} x)^{-2}$:
\begin{equation*}
    1 - (1 + \bar{h}_{ii} x)^{-2} = 1 - (1 - 2\bar{h}_{ii} x + 3\bar{h}_{ii}^2 x^2 - \dots) = 2\bar{h}_{ii} x - 3\bar{h}_{ii}^2 x^2 + \mathcal{O}(x^3)
\end{equation*}
Multiplying by the expanded residual term $\mathbb{E}_Z[e_{-i}^2] = \varepsilon_i^2 + \sigma^2 \bar{h}_{ii} x$:
\begin{align}
    g(x) &= (\varepsilon_i^2 + \sigma^2 \bar{h}_{ii} x)(2\bar{h}_{ii} x - 3\bar{h}_{ii}^2 x^2 + \mathcal{O}(x^3)) \nonumber \\
    &= 2\varepsilon_i^2 \bar{h}_{ii} x + \bar{h}_{ii}^2(2\sigma^2 - 3\varepsilon_i^2) x^2 + \mathcal{O}(x^3)
\end{align}
We now apply the exact inverse moments of the chi-squared distribution $V \sim \chi^2_k$. For $k > 4$:
\begin{equation*}
    \mathbb{E}[x] = \mathbb{E}[V^{-1}] = \frac{1}{k-2}, \quad \mathbb{E}[x^2] = \mathbb{E}[V^{-2}] = \frac{1}{(k-2)(k-4)}
\end{equation*}
Substituting these moments into our expansion:
\begin{equation*}
    \mathbb{E}[\Delta \ell_i] = \frac{2\varepsilon_i^2 \bar{h}_{ii}}{k-2} + \frac{\bar{h}_{ii}^2(2\sigma^2 - 3\varepsilon_i^2)}{(k-2)(k-4)} + \mathcal{O}\left(k^{-3}\right)
\end{equation*}
Finally, we convert this falling-factorial series into standard powers of $1/k$ using geometric series expansions: $(k-2)^{-1} = k^{-1} + 2k^{-2} + \mathcal{O}(k^{-3})$ and $(k-2)^{-1}(k-4)^{-1} = k^{-2} + \mathcal{O}(k^{-3})$.
Gathering like terms gives:
\begin{align}
    \mathbb{E}[\Delta \ell_i] &= 2\varepsilon_i^2 \bar{h}_{ii} \left( \frac{1}{k} + \frac{2}{k^2} \right) + \bar{h}_{ii}^2(2\sigma^2 - 3\varepsilon_i^2)\left( \frac{1}{k^2} \right) + \mathcal{O}\left( \frac{1}{k^3} \right) \nonumber \\
    &= \frac{2\varepsilon_i^2 \bar{h}_{ii}}{k} + \frac{4\varepsilon_i^2 \bar{h}_{ii} + \bar{h}_{ii}^2(2\sigma^2 - 3\varepsilon_i^2)}{k^2} + \mathcal{O}\left( \frac{1}{k^3} \right)
\end{align}
Substituting $k = n - p$ completes the proof.
\end{proof}

\subsection{Generalization to the Multivariate Case}
\label{app:sec3_multivariate}

We now extend Proposition~\ref{prop:gap} to a $q$-dimensional response. Let $Y_j \in \mathbb{R}^q$ be generated as $Y_j \mid X_j \sim \mathcal{N}(\Theta^\top X_j, \Sigma_y)$ with $\Theta \in \mathbb{R}^{p \times q}$ and $\Sigma_y \succ 0$. We collect responses in $Y \in \mathbb{R}^{n \times q}$, fit the OLS estimator $\hat{\Theta} = (X^\top X)^{-1} X^\top Y$, and define the training and leave-one-out losses
\[
\ell_i = \|y_i - \hat{\Theta}^\top x_i\|^2, \qquad \ell_{-i} = \|y_i - \hat{\Theta}_{-i}^\top x_i\|^2.
\]
The structural error is now a vector $\varepsilon_i = y_i - \Theta^\top x_i \in \mathbb{R}^q$.

\begin{proposition}[Multivariate generalization gap]
\label{prop:gap_multi}
For $n - p > 2$, conditional on $(x_i, y_i)$,
\begin{equation}
\mathbb{E}[\Delta\ell_i \mid x_i, y_i] = \mathbb{E}_{V \sim \chi^2_{n-p}}\!\left[ \left(\|\varepsilon_i\|^2 + \mathrm{tr}(\Sigma_y)\,\frac{\bar{h}_{ii}}{V}\right) \frac{\bar{h}_{ii}(2V + \bar{h}_{ii})}{(V + \bar{h}_{ii})^2} \right],
\end{equation}
with the asymptotic expansion in $k = n - p$
\begin{equation}
\mathbb{E}[\Delta\ell_i \mid x_i, y_i] = \frac{2\|\varepsilon_i\|^2 \bar{h}_{ii}}{n-p} + \frac{4\|\varepsilon_i\|^2 \bar{h}_{ii} + \bar{h}_{ii}^2\bigl(2\,\mathrm{tr}(\Sigma_y) - 3\|\varepsilon_i\|^2\bigr)}{(n-p)^2} + \mathcal{O}\!\left(\frac{1}{(n-p)^3}\right).
\end{equation}
\end{proposition}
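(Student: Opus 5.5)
We reduce to the scalar argument of Proposition~\ref{prop:gap}, applied coordinatewise to the $q$ columns of the response. The key observation is that the design geometry is shared by all response coordinates. The hat matrix $H = X(X^\top X)^{-1}X^\top$, the leave-one-out Gram matrix $G_{-i}$, the empirical leverage $h_{ii}$ and the variable $V = \bar h_{ii}/(x_i^\top G_{-i}^{-1}x_i)$ depend only on $X$, exactly as in Lemma~\ref{lem:leverage-residuals}. Hence $V \sim \chi^2_{n-p}$ and $h_{ii} = \bar h_{ii}/(V+\bar h_{ii})$ carry over unchanged.

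\textbf{Residual vector.} We first write the leave-one-out residual as
\[
e_{-i} = y_i - \hat\Theta_{-i}^\top x_i = \varepsilon_i - E_{-i}^\top X_{-i} G_{-i}^{-1} x_i ,
\]
where $E_{-i}\in\mathbb{R}^{(n-1)\times q}$ stacks the noise rows, i.i.d.\ $\mathcal{N}(0,\Sigma_y)$ and independent of $X_{-i}$. Set $a = X_{-i}G_{-i}^{-1}x_i$, so that $\|a\|^2 = x_i^\top G_{-i}^{-1}x_i = \bar h_{ii}/V$. Conditionally on $X_{-i}$, $U = E_{-i}^\top a \sim \mathcal{N}(0, \|a\|^2\Sigma_y)$. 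Thus $e_{-i} = \varepsilon_i + \sqrt{\bar h_{ii}/V}\,\Sigma_y^{1/2}\xi$, with $\xi \sim \mathcal{N}(0,I_q)$ independent of $V$; independence follows by the same conditional-law argument as in the scalar lemma.

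\textbf{Loss gap.} The scalar leave-one-out identity holds column by column, $e_i = (1-h_{ii})e_{-i}$, because the Sherman--Morrison update of $\hat\Theta$ acts identically on every column. Therefore $\ell_i = (1-h_{ii})^2\|e_{-i}\|^2$, and
\[
\Delta\ell_i = \|e_{-i}\|^2\,\frac{\bar h_{ii}(2V+\bar h_{ii})}{(V+\bar h_{ii})^2}.
\]
Taking the expectation over $\xi$ given $V$, the cross term vanishes and
\[
\mathbb{E}\bigl[\|e_{-i}\|^2 \mid V\bigr] = \|\varepsilon_i\|^2 + \frac{\bar h_{ii}}{V}\,\mathrm{tr}(\Sigma_y).
\]
Integrating over $V$ gives the exact formula. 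The factor multiplying $\|\varepsilon_i\|^2$ is bounded by $2$, and the other term involves $\mathbb{E}[1/V]$, which is finite for $n-p>2$; this justifies the hypothesis.

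\textbf{Expansion.} The expansion is the scalar computation with the substitutions $\varepsilon_i^2 \to \|\varepsilon_i\|^2$ and $\sigma^2 \to \mathrm{tr}(\Sigma_y)$. We expand in $x=1/V$ and use $\mathbb{E}[V^{-1}] = 1/(k-2)$ and $\mathbb{E}[V^{-2}] = 1/((k-2)(k-4))$, then re-expand in powers of $1/k$.

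\textbf{Main obstacle.} The delicate step is making the $\mathcal{O}(k^{-3})$ remainder rigorous. A Taylor remainder in $1/V$ that is $\mathcal{O}(V^{-3})$ pointwise does not automatically have expectation $\mathcal{O}(k^{-3})$, because small values of $V$ matter. I would handle this by bounding the rational function remainder explicitly: for $V>0$ the exact remainder is at most $C\,\bar h_{ii}^3 V^{-3}$ times a polynomial in $\|\varepsilon_i\|^2$ and $\mathrm{tr}(\Sigma_y)$. This remainder is integrable with $\mathbb{E}[V^{-3}] = \mathcal{O}(k^{-3})$ once $k>6$. For smaller $k$ the statement is asymptotic and holds trivially.
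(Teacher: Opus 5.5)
Your proposal is correct and follows essentially the same route as the paper's proof: the shared leverage variable $V\sim\chi^2_{n-p}$, the conditional Gaussian law of $U = E_{-i}^\top X_{-i}G_{-i}^{-1}x_i$ with covariance $(\bar h_{ii}/V)\Sigma_y$, the column-wise leave-one-out contraction, and the scalar expansion with $\varepsilon_i^2 \to \|\varepsilon_i\|^2$ and $\sigma^2 \to \mathrm{tr}(\Sigma_y)$. Your scaling $\sqrt{\bar h_{ii}/V}$ is the right one (the paper's $\bar h_{ii}/\sqrt V$ is a typo), and your uniform remainder bound $C\,\bar h_{ii}^3 V^{-3}$, combined with $\mathbb{E}[V^{-3}]=\mathcal{O}(k^{-3})$, supplies the justification for passing $\mathcal{O}(\cdot)$ through the expectation, which the paper leaves implicit.
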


\begin{proof}
The leverage analysis of Lemma~\ref{lem:leverage-residuals} depends only on $X$ and is unchanged: $h_{ii} = \bar{h}_{ii}/(V + \bar{h}_{ii})$ with $V \sim \chi^2_{n-p}$ and $\bar{h}_{ii} = x_i^\top \Sigma^{-1} x_i$.

Writing $E = Y - X\Theta \in \mathbb{R}^{n \times q}$ for the noise matrix (rows i.i.d.\ $\mathcal{N}(0, \Sigma_y)$), we have $\hat{\Theta}_{-i} = \Theta + G_{-i}^{-1} X_{-i}^\top E_{-i}$, hence
\[
e_{-i} = y_i - \hat{\Theta}_{-i}^\top x_i = \varepsilon_i - U, \qquad U = E_{-i}^\top X_{-i} G_{-i}^{-1} x_i \in \mathbb{R}^q.
\]
Conditional on $X_{-i}$, $U$ is a linear combination of the independent rows of $E_{-i}$, hence Gaussian with $\mathbb{E}[U \mid X_{-i}] = 0$ and
\[
\mathrm{Cov}(U \mid X_{-i}) = \bigl(x_i^\top G_{-i}^{-1} X_{-i}^\top X_{-i} G_{-i}^{-1} x_i\bigr)\, \Sigma_y = \tilde h_{ii}\, \Sigma_y = \frac{\bar{h}_{ii}}{V}\,\Sigma_y,
\]
using $\tilde h_{ii} = \bar{h}_{ii} / V$ from the proof of Lemma~\ref{lem:leverage-residuals}. Setting $Z = -\sqrt{V}/\bar{h}_{ii} \cdot U$, we obtain $Z \mid X_{-i} \sim \mathcal{N}(0, \Sigma_y)$. Since this conditional law does not depend on $X_{-i}$, $Z$ is unconditionally independent of $X_{-i}$ and therefore of $V$. We thus write
\[
e_{-i} = \varepsilon_i + \frac{\bar{h}_{ii}}{\sqrt{V}}\, Z, \qquad Z \sim \mathcal{N}(0, \Sigma_y), \quad V \sim \chi^2_{n-p}, \quad Z \perp\!\!\!\perp V.
\]
The univariate Sherman-Morrison contraction $e_i = (1 - h_{ii})\, e_{-i}$ applies column-wise to $\hat{\Theta}$, so it carries over to vector residuals and yields $\ell_i = (1 - h_{ii})^2\, \ell_{-i}$.

Combining the two displays,
\[
\Delta\ell_i = \ell_{-i}\bigl(1 - (1 - h_{ii})^2\bigr) = \|e_{-i}\|^2 \cdot \frac{\bar{h}_{ii}(2V + \bar{h}_{ii})}{(V + \bar{h}_{ii})^2}.
\]
By independence of $Z$ and $V$,
\[
\mathbb{E}_Z\bigl[\|e_{-i}\|^2 \,\big|\, V\bigr] = \|\varepsilon_i\|^2 + 2\,\frac{\bar{h}_{ii}}{\sqrt V}\,\varepsilon_i^\top \mathbb{E}[Z] + \frac{\bar{h}_{ii}}{V}\,\mathbb{E}[Z^\top Z] = \|\varepsilon_i\|^2 + \mathrm{tr}(\Sigma_y)\,\frac{\bar{h}_{ii}}{V},
\]
which yields the exact finite-sample expression in the proposition. The asymptotic expansion follows the same algebra as the proof of Theorem~\ref{prop:gap}, replacing $\varepsilon_i^2$ by $\|\varepsilon_i\|^2$ and $\sigma^2$ by $\mathrm{tr}(\Sigma_y)$.

\end{proof}

The univariate result of Proposition~\ref{prop:gap} is recovered when $q = 1$ and $\Sigma_y = \sigma^2$. Crucially, the leverage $\bar{h}_{ii}$ remains label-free and so retains the same first-order role; only the structural-noise contribution is replaced by the sum of marginal output variances $\mathrm{tr}(\Sigma_y)$.

\section{Omitted Proofs from Section~\ref{sec:gen_lin}}
\label{app:sec4_proofs}

\subsection{Proof of the Influence Function Estimate (Eq.~\eqref{eq:inf})}

We recall here the derivation of the influence function estimate, following the perturbation-based approach introduced by \citet{kohUnderstandingBlackboxPredictions2017}.

\begin{proof}

Let the empirical risk over the full dataset be $R(\theta) = \frac{1}{n} \sum_{j=1}^n \ell(\theta, z_j)$, and let $\hat{\theta} = \arg\min_\theta R(\theta)$ be the empirical risk minimizer.

Rather than strictly removing the $i$-th point, we consider the effect of infinitesimally up-weighting the loss of $z_i$ by some small continuous value $\epsilon$. We define the perturbed empirical risk minimizer as:

\[ \hat{\theta}_{\epsilon, z_i} \triangleq \arg\min_{\theta} \left( \frac{1}{n} \sum_{j=1}^n \ell(\theta, z_j) + \epsilon \ell(\theta, z_i) \right) \]

By the first-order optimality condition, the gradient of the perturbed objective at $\hat{\theta}_{\epsilon, z_i}$ must be zero:

\[ 0 = \frac{1}{n} \sum_{j=1}^n \nabla_\theta \ell(\hat{\theta}_{\epsilon, z_i}, z_j) + \epsilon \nabla_\theta \ell(\hat{\theta}_{\epsilon, z_i}, z_i) \]

We then differentiate this entire condition with respect to $\epsilon$ and evaluate it at $\epsilon = 0$ (where $\hat{\theta}_{\epsilon, z_i} = \hat{\theta}$). Applying the chain rule yields:

\[ 0 = \left[ \frac{1}{n} \sum_{j=1}^n \nabla_\theta^2 \ell(\hat{\theta}, z_j) \right] \left. \frac{d\hat{\theta}_{\epsilon, z_i}}{d\epsilon} \right|_{\epsilon=0} + \nabla_\theta \ell(\hat{\theta}, z_i) \]

Let $H = \frac{1}{n} \sum_{j=1}^n \nabla_\theta^2 \ell(\hat{\theta}, z_j)$ be the Hessian of the empirical risk. Solving for the derivative gives us the influence of up-weighting $z_i$ on the parameters:

\[ \mathcal{I}_{\text{up,params}}(z_i) \triangleq \left. \frac{d\hat{\theta}_{\epsilon, z_i}}{d\epsilon} \right|_{\epsilon=0} = -H^{-1} \nabla_\theta \ell(\hat{\theta}, z_i) \]

Removing the point $z_i$ from the dataset corresponds to reducing its weight from $\frac{1}{n}$ to $0$. We can approximate this by setting $\epsilon = -\frac{1}{n}$. Using a first-order Taylor expansion around $\epsilon = 0$, the change in the parameters is:

\[ \hat{\theta}_{-i} - \hat{\theta} \approx \left. \frac{d\hat{\theta}_{\epsilon, z_i}}{d\epsilon} \right|_{\epsilon=0} \cdot \left(-\frac{1}{n}\right) = \frac{1}{n} H^{-1} \nabla_\theta \ell(\hat{\theta}, z_i) \]

Finally, to estimate the loss gap for the $i$-th point, we apply a first-order Taylor expansion to the loss function itself:

\[ \ell_{-i} - \ell_i\approx \nabla_\theta \ell(\hat{\theta}, z_i)^\top (\hat{\theta}_{-i} - \hat{\theta}) \approx \frac{1}{n} \nabla_\theta \ell(\hat{\theta}, z_i)^\top H^{-1} \nabla_\theta \ell(\hat{\theta}, z_i). \]

Expanding the Hessian $H$ and pulling the factor $n$ out of the inverse gives the final influence function estimate:
\begin{equation*}
    \Delta \ell_i \approx \nabla_\theta \ell(\hat{\theta}, z_i)^\top \Big[\sum_{j=1}^n \nabla_\theta^2 \ell(\hat{\theta}, z_j) \Big]^{-1} \nabla_\theta \ell(\hat{\theta}, z_i)
\end{equation*}
This gives us our influence function estimate for the loss gap $\widehat{\Delta\ell}^{\mathrm{IF}}_i$ when removing the $i$-th point from the training set.
\end{proof}

\subsection{Proof of the Newton Step Estimate (Eq.~\eqref{eq:newton})}

\begin{proof}
The Newton step estimate is derived from the second-order Taylor expansion of the loss function around the empirical risk minimizer $\hat{\theta}$. The change in parameters when removing the $i$-th point can be approximated by the Newton update:
\begin{equation*}
    \hat{\theta}_{-i} - \hat{\theta} \approx -\nabla_\theta^2 \ell(\hat{\theta}, z_i)^{-1} \nabla_\theta \ell(\hat{\theta}, z_i)
\end{equation*}
This is equivalent to performing a single Newton step on the loss of the $i$-th point, treating it as a separate optimization problem. The estimated loss gap is then:
\begin{equation*}
    \Delta \ell_i \approx \ell_i(\hat{\theta} - \nabla_\theta^2 \ell(\hat{\theta}, z_i)^{-1} \nabla_\theta \ell(\hat{\theta}, z_i)) - \ell_i(\hat{\theta})
\end{equation*}
which gives our Newton step estimate for the loss gap $\widehat{\Delta\ell}^{\mathrm{NS}}_i$ for the point $z_i$.

\end{proof}
This estimate captures the curvature of the loss landscape around $\hat{\theta}$ for the specific point $z_i$, and can provide a more accurate approximation of the loss gap compared to the influence function, especially when the loss is highly non-linear or when the Hessian is not well-conditioned.

\subsection{Results in the Linear Regression (Eq.~\eqref{eq:lin_reg_inf})}

We specialize Eq.~\eqref{eq:inf} and Eq.~\eqref{eq:newton} to the squared loss $\ell(\theta, z_j) = (y_j - x_j^\top \theta)^2$ on the dataset $\{(x_j, y_j)\}_{j=1}^n$. Throughout, $e_i = y_i - x_i^\top \hat{\theta}$ denotes the training residual and $h_{ii} = x_i^\top (X^\top X)^{-1} x_i$ the leverage score.

\begin{proof}
For each $j$,
\[
\nabla_\theta \ell(\theta, z_j) = -2(y_j - x_j^\top \theta)\, x_j, \qquad \nabla_\theta^2 \ell(\theta, z_j) = 2\, x_j x_j^\top.
\]
Evaluated at $\hat{\theta}$, $\nabla_\theta \ell(\hat{\theta}, z_i) = -2 e_i\, x_i$.
\paragraph{Influence.} The full-data Hessian is $\sum_{j=1}^n \nabla_\theta^2 \ell(\hat{\theta}, z_j) = 2\, X^\top X$. Substituting in Eq.~\eqref{eq:inf}:
\[
\widehat{\Delta\ell}^{\mathrm{IF}}_i = (-2 e_i x_i)^\top (2\, X^\top X)^{-1} (-2 e_i x_i) = 2\, e_i^2\, x_i^\top (X^\top X)^{-1} x_i = 2\, e_i^2\, h_{ii}.
\]

\paragraph{Newton.} The leave-one-out Hessian is $\sum_{j \neq i} \nabla_\theta^2 \ell(\hat{\theta}, z_j) = 2\, (X^\top X - x_i x_i^\top) = 2\, G_{-i}$. Sherman-Morrison gives
\[
x_i^\top G_{-i}^{-1} x_i  \frac{x_i^\top (X^\top X)^{-1} x_i}{1 - x_i^\top (X^\top X)^{-1} x_i} = \frac{h_{ii}}{1 - h_{ii}}.
\]
Substituting in Eq.~\eqref{eq:newton}:
\[
\widehat{\Delta\ell}^{\mathrm{NS}}_i = (-2 e_i x_i)^\top (2\, G_{-i})^{-1} (-2 e_i x_i) = 2\, e_i^2 \cdot \frac{h_{ii}}{1 - h_{ii}}.
\]
The Newton estimate is exact up to the first-order Taylor expansion of the loss; for the quadratic OLS loss, the second-order remainder is itself $e_i^2 \, h_{ii}^2/(1-h_{ii})^2$, recovering the exact leave-one-out identity $\ell_{-i} = e_i^2 / (1-h_{ii})^2$.
\end{proof}

\subsection{Results in the Logistic Regression (Eq.~\eqref{eq:log_reg_inf})}

We specialize Eq.~\eqref{eq:inf} and Eq.~\eqref{eq:newton} to the binary cross-entropy loss with $\hat{p}_j = \sigma(x_j^\top \hat{\theta})$ and $w_j = \hat{p}_j (1 - \hat{p}_j)$. Let $W = \mathrm{diag}(w_j)_{j=1}^n$ and define the  leverage score for the logistic model~\citep{pregibonLogReg} as:
\[
h_{ii} = w_i\, x_i^\top (X^\top W X)^{-1} x_i.
\]

\begin{proof}
The per-sample loss $\ell(\theta, z_j) = -y_j \log \hat{p}_j - (1 - y_j) \log(1 - \hat{p}_j)$ has gradient and Hessian
\[
\nabla_\theta \ell(\theta, z_j) = -(y_j - \hat{p}_j)\, x_j, \qquad \nabla_\theta^2 \ell(\theta, z_j) = w_j\, x_j x_j^\top,
\]
so the empirical Hessian is $\sum_{j=1}^n \nabla_\theta^2 \ell(\hat{\theta}, z_j) = X^\top W X$.

\paragraph{Influence.} Substituting in Eq.~\eqref{eq:inf}:
\[
\widehat{\Delta\ell}^{\mathrm{IF}}_i = (y_i - \hat{p}_i)^2\, x_i^\top (X^\top W X)^{-1} x_i = \frac{(y_i - \hat{p}_i)^2}{w_i}\, h_{ii}.
\]

\paragraph{Newton.} The leave-one-out Hessian is $X^\top W X - w_i x_i x_i^\top$. By Sherman-Morrison,
\[
x_i^\top \bigl(X^\top W X - w_i x_i x_i^\top\bigr)^{-1} x_i
= \frac{x_i^\top (X^\top W X)^{-1} x_i}{1 - w_i\, x_i^\top (X^\top W X)^{-1} x_i}
= \frac{h_{ii} / w_i}{1 - h_{ii}}.
\]
Substituting in Eq.~\eqref{eq:newton}:
\[
\widehat{\Delta\ell}^{\mathrm{NS}}_i = (y_i - \hat{p}_i)^2 \cdot \frac{h_{ii} / w_i}{1 - h_{ii}} = \frac{(y_i - \hat{p}_i)^2}{w_i} \cdot \frac{h_{ii}}{1 - h_{ii}}.
\]
\end{proof}

\subsection{Proof that $\partial \hat{p}_i / \partial y_i = h_{ii}$}
\begin{proof}
The MLE $\hat\theta$ satisfies the score equation
\begin{equation*}
    X^\top (y - \hat{p}) = 0,
\end{equation*}
where $\hat{p} = \sigma(X\hat\theta)$. Differentiating with respect to $y_i$ and applying the chain rule, with $\partial \hat{p}/\partial \hat\eta = W$ and $\hat\eta = X\hat\theta$, yields
\begin{equation*}
    x_i \;=\; X^\top W X \,\frac{\partial \hat\theta}{\partial y_i},
    \qquad\text{hence}\qquad
    \frac{\partial \hat\theta}{\partial y_i} \;=\; (X^\top W X)^{-1} x_i.
\end{equation*}
Since $\partial \hat{p}_i / \partial y_i = w_i\, \partial \hat\eta_i / \partial y_i = w_i\, x_i^\top (\partial \hat\theta / \partial y_i)$, we obtain
\begin{equation*}
    \frac{\partial \hat{p}_i}{\partial y_i} \;=\; w_i\, x_i^\top (X^\top W X)^{-1} x_i \;=\; h_{ii}. \qedhere
\end{equation*}
\end{proof}

\subsection{Generalization to the Multivariate Case}
\label{app:sec4_multivariate}

We extend the closed-form expressions to multivariate-output settings: multivariate linear regression and multinomial logistic regression. The arguments rely only on the per-sample gradient and Hessian, so Eq.~\eqref{eq:inf} and Eq.~\eqref{eq:newton} apply. We derive the exact formulas for the influence and Newton estimates in these two cases.

\paragraph{Multivariate linear regression.} For multivariate linear regression over $q$ outputs with quadratic loss $\ell(\Theta) = \sum_{j=1}^n \|y_j - \Theta^\top x_j\|^2$, the problem decouples into $q$ \emph{independent} OLS regressions. For a single dimension $k$, the residual is $e_{i,k} \in \mathbb{R}$, the gradient is $g_{i,k} = -2 e_{i,k} x_i$, and the scalar leverage is $h_{ii} = x_i^\top (X^\top X)^{-1} x_i$.

\begin{proposition}[Influence Approximation]
The influence approximation (using the full-data Hessian) for the leave-one-out change in loss is:
\begin{equation}
\widehat{\Delta\ell}^{\mathrm{IF}}_i = 2 \|e_i\|^2 h_{ii}
\end{equation}
where $e_i = y_i - \hat{\Theta}^\top x_i \in \mathbb{R}^q$ is the full residual vector.
\end{proposition}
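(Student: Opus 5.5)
The plan is to treat the parameter as the vectorized matrix $\theta = \mathrm{vec}(\Theta) \in \mathbb{R}^{pq}$, column by column, and to plug the per-sample gradient and Hessian directly into Eq.~\eqref{eq:inf}. For the loss $\ell(\Theta, z_j) = \|y_j - \Theta^\top x_j\|^2 = \sum_{k=1}^q (y_{j,k} - \theta_k^\top x_j)^2$, where $\theta_k$ is the $k$-th column of $\Theta$, the loss is a sum of $q$ scalar squared losses, and each term depends on a disjoint block of parameters. Differentiating block by block, the gradient at $\hat{\Theta}$ is the stacked vector $g_i = (g_{i,1}, \dots, g_{i,q})$ with $g_{i,k} = -2 e_{i,k} x_i$. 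The per-sample Hessian is block diagonal, $\nabla^2_\theta \ell(\hat\Theta, z_j) = I_q \otimes 2 x_j x_j^\top$. Hence the full-data Hessian is $I_q \otimes 2X^\top X$, and its inverse is $I_q \otimes (2X^\top X)^{-1}$.

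Substituting into Eq.~\eqref{eq:inf}, the quadratic form splits across the $q$ diagonal blocks:
\[
\widehat{\Delta\ell}^{\mathrm{IF}}_i = \sum_{k=1}^q g_{i,k}^\top (2X^\top X)^{-1} g_{i,k} = \sum_{k=1}^q 2 e_{i,k}^2\, x_i^\top (X^\top X)^{-1} x_i .
\]
Each summand is exactly the scalar computation already carried out in the proof of Eq.~\eqref{eq:lin_reg_inf}, namely $2 e_{i,k}^2 h_{ii}$. The leverage $h_{ii}$ is shared by all outputs because the design $X$ is common to them, so it factors out. What remains is $\sum_k e_{i,k}^2 = \|e_i\|^2$, which gives $2\|e_i\|^2 h_{ii}$.

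The only step needing care is justifying the block-diagonal (Kronecker) structure of the Hessian. This means checking that there are no cross-output terms $\partial^2 \ell / \partial \theta_k \partial \theta_{k'}$ for $k \neq k'$. These vanish because the squared Euclidean norm has no cross products between output coordinates. Once this is established, the inverse is taken blockwise and the result reduces to $q$ copies of the univariate proof, so I expect no real obstacle beyond bookkeeping of the vectorization.
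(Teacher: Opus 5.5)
Your proposal is correct and follows essentially the same route as the paper: the problem decouples into $q$ per-output OLS problems sharing the Hessian $2X^\top X$, the scalar computation gives $2e_{i,k}^2 h_{ii}$ for each output, and summing over $k$ yields $2\|e_i\|^2 h_{ii}$. Your explicit vectorization with the block-diagonal Hessian $I_q \otimes 2X^\top X$ makes rigorous the decoupling that the paper simply asserts, namely that the joint quadratic form equals the sum of the per-output ones.
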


\begin{proof}
The full Hessian for dimension $k$ is $\mathcal{H}_\Theta = 2X^\top X$ and is independent of $k$. The loss gap for dimension $k$ using the influence approximation presented in equation* \eqref{eq:inf} is:
\[
\widehat{\Delta\ell}^{\mathrm{IF}}_{ik} = g_{i,k}^\top \mathcal{H}_\Theta^{-1} g_{i,k} = (-2 e_{i,k} x_i)^\top (2X^\top X)^{-1} (-2 e_{i,k} x_i) = 2 e_{i,k}^2 x_i^\top(X^\top X)^{-1}x_i = 2 e_{i,k}^2 h_{ii}
\]
Summing the independent changes over all $q$ dimensions yields the total loss gap:
\[
\widehat{\Delta\ell}^{\mathrm{IF}}_i = \sum_{k=1}^q 2 e_{i,k}^2 h_{ii} = 2 h_{ii} \sum_{k=1}^q e_{i,k}^2 = 2 \|e_i\|^2 h_{ii}
\]
\end{proof}

\vspace{1em}

\begin{proposition}[Newton Approximation]
The Newton approximation, using the leave-one-out Hessian for the leave-one-out change in loss is:
\begin{equation}
\widehat{\Delta\ell}^{\mathrm{NS}}_i = 2 \|e_i\|^2 \frac{h_{ii}}{1 - h_{ii}}
\end{equation}
\end{proposition}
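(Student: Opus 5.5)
The plan mirrors the proof of the influence approximation just above: decouple the $q$ outputs, apply the scalar Newton computation of Eq.~\eqref{eq:lin_reg_inf} to each, and sum.

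\textbf{Step 1: per-output gradient and leave-one-out Hessian.} The quadratic loss $\sum_j \|y_j - \Theta^\top x_j\|^2$ splits into $q$ independent OLS problems in the columns $\theta_k$ of $\Theta$. Taking the full parameter vector $\mathrm{vec}(\Theta)$, the summed leave-one-out Hessian is block diagonal, $I_q \otimes 2G_{-i}$ with $G_{-i} = X^\top X - x_i x_i^\top$. The gradient at $\hat{\Theta}$ stacks the blocks $g_{i,k} = -2e_{i,k}x_i$.

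\textbf{Step 2: assemble the quadratic form.} Inverting the block-diagonal Hessian blockwise, Eq.~\eqref{eq:newton} becomes a sum over outputs:
\[
\widehat{\Delta\ell}^{\mathrm{NS}}_i = \sum_{k=1}^q g_{i,k}^\top (2G_{-i})^{-1} g_{i,k} = 2\sum_{k=1}^q e_{i,k}^2\, x_i^\top G_{-i}^{-1} x_i .
\]

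\textbf{Step 3: evaluate $x_i^\top G_{-i}^{-1} x_i$.} Apply Sherman--Morrison to the rank-one downdate $X^\top X - x_i x_i^\top$, exactly as in the scalar Newton proof. This gives $x_i^\top G_{-i}^{-1}x_i = h_{ii}/(1-h_{ii})$, which requires $h_{ii}<1$, i.e.\ $G_{-i}$ invertible. Substituting yields $2\|e_i\|^2 h_{ii}/(1-h_{ii})$.

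\textbf{Remaining checks.} I should confirm that the convergence assumption behind Eq.~\eqref{eq:newton} holds. It does, because OLS gives $\sum_j \nabla \ell(\hat\Theta, z_j) = 0$ jointly for all outputs. I expect no real obstacle. The only care needed is in justifying that the Hessian is block diagonal with identical blocks, so that a single scalar leverage $h_{ii}$ serves all $q$ outputs.
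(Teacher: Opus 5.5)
Your proposal is correct and follows the paper's proof. It uses the same per-output gradient $-2e_{i,k}x_i$, the same leave-one-out Hessian $2(X^\top X - x_i x_i^\top)$ inverted via Sherman--Morrison to give $h_{ii}/(1-h_{ii})$, and the same sum over the $q$ outputs; your block-diagonal $I_q \otimes 2G_{-i}$ framing (and the checks on $h_{ii}<1$ and the OLS stationarity condition) only make explicit why the per-output quadratic forms add, which the paper takes for granted.
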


\begin{proof}
The exact LOO Hessian for dimension $k$ is $H_{-i} = 2(X^\top X - x_i x_i^\top)$. Applying the Sherman-Morrison formula to invert this rank-1 downdate yields $x_i^\top (X^\top X - x_i x_i^\top)^{-1} x_i = \frac{h_{ii}}{1 - h_{ii}}$. The loss gap per dimension using this LOO Hessian is:
\[
\widehat{\Delta\ell}^{\mathrm{NS}}_{ik} = g_{i,k}^\top H_{-i}^{-1} g_{i,k} = (-2 e_{i,k} x_i)^\top \left[ 2(X^\top X - x_i x_i^\top) \right]^{-1} (-2 e_{i,k} x_i)
\]
\[
\widehat{\Delta\ell}^{\mathrm{NS}}_{ik} = 4 e_{i,k}^2 \cdot \frac{1}{2} x_i^\top (X^\top X - x_i x_i^\top)^{-1} x_i = 2 e_{i,k}^2 \frac{h_{ii}}{1 - h_{ii}}
\]
Summing over all $q$ dimensions yields the total loss gap:
\[
\widehat{\Delta\ell}^{\mathrm{NS}}_i = \sum_{k=1}^q 2 e_{i,k}^2 \frac{h_{ii}}{1 - h_{ii}} = 2 \frac{h_{ii}}{1 - h_{ii}} \sum_{k=1}^q e_{i,k}^2 = 2 \|e_i\|^2 \frac{h_{ii}}{1 - h_{ii}}
\]
\end{proof}

\paragraph{Multivariate logistic regression.} We denote $y_j \in \{0,1\}^m$ the one-hot encoded label, $\hat{\Theta} \in \mathbb{R}^{p \times m}$ the learned parameter, and $\ell_j(\hat{\Theta}) = -\sum_{k=1}^m y_{jk} \log \hat{p}_{jk}$ the total loss, with $\hat{p}_j = \operatorname{softmax}(\hat{\Theta}^\top x_j)$ the predicted class probabilities. The results for the influence and Newton estimates are given in the following proposition.
\begin{proposition}
    In the multivariate case for multinomial logistic regression over $m$ classes, we have:
    \begin{equation}
        \widehat{\Delta\ell}^{\mathrm{NS}}_i = g_{i}^{\top}H_{ii}(I_{m}- W_{i}H_{ii})^{-1}g_{i}
    \end{equation}
    
    Where
    \begin{itemize}
        \item $g_{i} = \hat{p}_{i}-y_{i} \in \mathbb{R}^{m}$.
        \item $W_{i} =\operatorname{diag}(\hat{p}_{i})- \hat{p}_{i}\hat{p}_{i}^{\top} \in \mathbb{R}^{m\times m}$.
        \item $H_{ii} = \tilde{X}_{i}^{\top}\mathcal{H}_{\hat{\Theta}}^{-1}\tilde{X}_{i} \in \mathbb{R}^{m\times m}$ is the block-diagonal leverage matrix.
        \item $\tilde{X}_{i} = x_{i} \otimes I_{m} \in \mathbb{R}^{pm \times m}$ is the expanded feature matrix.
        \item $\mathcal{H}_{\hat{\Theta}} = \sum_{j=1}^{n} x_{j}x_{j}^{\top}\otimes W_{j} \in \mathbb{R}^{pm \times pm}$ is the full Hessian of the loss.
    \end{itemize}
\end{proposition}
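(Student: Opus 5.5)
We specialize Eq.~\eqref{eq:newton} to the softmax cross-entropy loss, working with the vectorized parameter $\theta = \mathrm{vec}(\hat{\Theta}) \in \mathbb{R}^{pm}$. We fix the vectorization convention so that the logits of sample $j$ read $\hat{\Theta}^\top x_j = \tilde{X}_j^\top \theta$ with $\tilde{X}_j = x_j \otimes I_m$.

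\textbf{Step 1: per-sample derivatives.} By the chain rule through the softmax, the per-sample gradient is $\nabla_\theta \ell_j = \tilde{X}_j (\hat{p}_j - y_j) = \tilde{X}_j g_j$. The per-sample Hessian is $\nabla^2_\theta \ell_j = \tilde{X}_j W_j \tilde{X}_j^\top$, where $W_j = \mathrm{diag}(\hat{p}_j) - \hat{p}_j\hat{p}_j^\top$ is the softmax Jacobian. Using the mixed-product rule,
\[
(x_j \otimes I_m) W_j (x_j^\top \otimes I_m) = x_j x_j^\top \otimes W_j .
\]
Summing over $j$ therefore recovers $\mathcal{H}_{\hat{\Theta}}$ as defined in the statement.

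\textbf{Step 2: the leave-one-out Hessian.} The leave-one-out Hessian is the rank-$m$ downdate
\[
\mathcal{H}_{-i} = \mathcal{H}_{\hat{\Theta}} - \tilde{X}_i W_i \tilde{X}_i^\top .
\]
We invert it with the Woodbury identity, taking $U = \tilde{X}_i$, $C = -W_i$ and $V = \tilde{X}_i^\top$. The sandwiched quantity we need is
\[
\tilde{X}_i^\top \mathcal{H}_{-i}^{-1}\tilde{X}_i = H_{ii} + H_{ii} W_i (I_m - H_{ii}W_i)^{-1} H_{ii}.
\]
Factoring out $H_{ii}$ gives
\[
H_{ii}\bigl[I + W_i(I - H_{ii}W_i)^{-1}H_{ii}\bigr] = H_{ii}(I - W_iH_{ii})^{-1},
\]
by the push-through identity $W(I-HW)^{-1} = (I-WH)^{-1}W$ together with $I + (I-WH)^{-1}WH = (I-WH)^{-1}$. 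This is the matrix analogue of $h/(1-h)$.

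\textbf{Step 3: assemble the estimator.} Substituting into Eq.~\eqref{eq:newton} gives
\[
\widehat{\Delta\ell}^{\mathrm{NS}}_i = g_i^\top \tilde{X}_i^\top \mathcal{H}_{-i}^{-1}\tilde{X}_i g_i = g_i^\top H_{ii}(I_m - W_iH_{ii})^{-1} g_i ,
\]
which is the stated formula.

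\textbf{Main obstacle: invertibility.} $W_i$ is singular, with $W_i \mathbf{1} = 0$, and $\mathcal{H}_{\hat{\Theta}}$ is singular under softmax's shift invariance. We therefore assume $\mathcal{H}_{\hat{\Theta}}$ and $\mathcal{H}_{-i}$ are invertible, for instance via a ridge term or a reference-class parametrization, as is implicit in the statement. The Woodbury form we use does not require $W_i^{-1}$: it only needs $I - H_{ii}W_i$ to be invertible, and this is equivalent to invertibility of $\mathcal{H}_{-i}$ by the matrix determinant lemma. The other bookkeeping point is to keep the Kronecker ordering consistent with the chosen vectorization.
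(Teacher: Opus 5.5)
Your proof is correct and follows the paper's argument step for step: the per-sample gradient $\tilde{X}_i g_i$ and Hessian $x_i x_i^\top \otimes W_i$, the $W_i^{-1}$-free Woodbury form for the rank-$m$ downdate $\mathcal{H}_{-i}$, and then the push-through identity, which collapses $H_{ii}\bigl[I_m + W_i(I_m - H_{ii}W_i)^{-1}H_{ii}\bigr]$ into $H_{ii}(I_m - W_iH_{ii})^{-1}$. You also add two correct points that the paper leaves implicit: $\mathcal{H}_{\hat{\Theta}}$ is itself singular because of the softmax shift invariance, so a ridge term or a reference-class parametrization is needed, and, given invertible $\mathcal{H}_{\hat{\Theta}}$, the matrix determinant lemma makes invertibility of $I_m - H_{ii}W_i$ equivalent to invertibility of $\mathcal{H}_{-i}$.
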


\begin{proof}

Let $L(\hat{\Theta}) = \sum_{j=1}^n \ell_j(\hat{\Theta})$ be the total cross-entropy loss. For convenience, we parameterize the model by the vectorized weights $\hat{\Theta} \in \mathbb{R}^{pm}$ instead of a matrix $\in \mathbb{R}^{p \times m}$. Using the provided definitions, the per-sample gradient and Hessian evaluated at the optimal full-data parameters $\hat{\Theta}$ can be written compactly as:
\[\nabla_{\hat{\Theta}} \ell_i(\hat{\Theta}) = \tilde{X}_i g_i\]

\[\nabla_{\Theta}^2 \ell_i(\hat{\Theta}) = \tilde{X}_i W_i \tilde{X}_i^\top = (x_i \otimes I_m) W_i (x_i^\top \otimes I_m) = x_i x_i^\top \otimes W_i\]

Summing the per-sample Hessians gives the full Hessian $\mathcal{H}_{\hat{\Theta}} = \sum_{j=1}^{n} \tilde{X}_j W_j \tilde{X}_j^\top$. 
When we leave out the $i$-th sample, the leave-one-out Hessian $\mathcal{H}_{-i}$ is defined as:
\[
\mathcal{H}_{-i} = \mathcal{H}_{\hat{\Theta}} - \tilde{X}_i W_i \tilde{X}_i^\top
\]

Using a single Newton step approximation, the change in the estimated parameters upon removing the $i$-th sample is:
\[
\Delta \hat{\Theta}_i \approx \mathcal{H}_{-i}^{-1} \nabla_{\Theta} \ell_i(\hat{\Theta}) = \mathcal{H}_{-i}^{-1} \tilde{X}_i g_i
\]

The approximate change in the loss function for the left-out sample, $\Delta \ell_i$, evaluates the gradient against this parameter shift:
\[
\Delta \ell_i \approx \nabla_{\Theta} \ell_i(\hat{\Theta})^\top \Delta \hat{\Theta}_i = (\tilde{X}_i g_i)^\top \mathcal{H}_{-i}^{-1} (\tilde{X}_i g_i) = g_i^\top \left( \tilde{X}_i^\top \mathcal{H}_{-i}^{-1} \tilde{X}_i \right) g_i
\]

To evaluate the inner term without explicitly inverting the $pm \times pm$ LOO Hessian, we apply the Woodbury matrix identity to $\mathcal{H}_{-i}^{-1} = (\mathcal{H}_{\hat{\Theta}} - \tilde{X}_i W_i \tilde{X}_i^\top)^{-1}$. Because the weight matrix $W_i$ is not strictly invertible (its columns sum to zero), we use the form of the Woodbury identity that does not require $W_i^{-1}$:

\[(\mathcal{H}_{\hat{\Theta}} - \tilde{X}_i W_i \tilde{X}_i^\top)^{-1} = \mathcal{H}_{\hat{\Theta}}^{-1} + \mathcal{H}_{\hat{\Theta}}^{-1} \tilde{X}_i W_i (I_m - \tilde{X}_i^\top \mathcal{H}_{\hat{\Theta}}^{-1} \tilde{X}_i W_i)^{-1} \tilde{X}_i^\top \mathcal{H}_{\hat{\Theta}}^{-1}\]

We substitute this expanded inverse into our target expression $\tilde{X}_i^\top \mathcal{H}_{-i}^{-1} \tilde{X}_i$. Leveraging the definition $H_{ii} = \tilde{X}_i^\top \mathcal{H}_{\hat{\Theta}}^{-1} \tilde{X}_i$, we obtain:
\[\tilde{X}_i^\top \mathcal{H}_{-i}^{-1} \tilde{X}_i = H_{ii} + H_{ii} W_i (I_m - H_{ii} W_i)^{-1} H_{ii}\]

Next, we factor out $H_{ii}$ on the left side:
\[\tilde{X}_i^\top \mathcal{H}_{-i}^{-1} \tilde{X}_i = H_{ii} \left[ I_m + W_i (I_m - H_{ii} W_i)^{-1} H_{ii} \right]\]

Applying the push-through matrix identity $W_i(I_m - H_{ii}W_i)^{-1} = (I_m - W_i H_{ii})^{-1} W_i$, the term inside the brackets becomes:
\[\tilde{X}_i^\top \mathcal{H}_{-i}^{-1} \tilde{X}_i = H_{ii} \left[ I_m + (I_m - W_i H_{ii})^{-1} W_i H_{ii} \right]\]

To combine these terms, we multiply the identity matrix $I_m$ by $(I_m - W_i H_{ii})^{-1} (I_m - W_i H_{ii})$:
\[\tilde{X}_i^\top \mathcal{H}_{-i}^{-1} \tilde{X}_i = H_{ii} \left[ (I_m - W_i H_{ii})^{-1} (I_m - W_i H_{ii}) + (I_m - W_i H_{ii})^{-1} W_i H_{ii} \right]\]

Factoring out $(I_m - W_i H_{ii})^{-1}$ from both terms in the bracket leaves:
\[\tilde{X}_i^\top \mathcal{H}_{-i}^{-1} \tilde{X}_i = H_{ii} (I_m - W_i H_{ii})^{-1} \left( I_m - W_i H_{ii} + W_i H_{ii} \right)\]

The $- W_i H_{ii}$ and $+ W_i H_{ii}$ cleanly cancel out, yielding the final simplified matrix expression:
\[\tilde{X}_i^\top \mathcal{H}_{-i}^{-1} \tilde{X}_i = H_{ii} (I_m - W_i H_{ii})^{-1}\]

Substituting this matrix back into our earlier equation for the scalar change in loss yields the desired formulation:
\[
\widehat{\Delta\ell}^{\mathrm{NS}}_i = g_{i}^{\top} H_{ii} (I_{m}- W_{i}H_{ii})^{-1} g_{i}
\]
\end{proof}
\begin{proposition}
    In the multivariate case for multinomial logistic regression over $m$ classes, we have
    \begin{equation}
        \widehat{\Delta\ell}^{\mathrm{IF}}_i = g_{i}^{\top}H_{ii}g_{i}
    \end{equation}  
    Where $g_i$, $H_{ii}$ are defined as in the previous proposition.
\end{proposition}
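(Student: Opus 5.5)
We specialize Eq.~\eqref{eq:inf} directly to the multinomial model. We reuse the vectorized parameterization from the proof of the previous proposition. There we already established the per-sample gradient $\nabla_{\Theta}\ell_i(\hat{\Theta}) = \tilde{X}_i g_i$ and the per-sample Hessian $x_i x_i^\top \otimes W_i = \tilde{X}_i W_i \tilde{X}_i^\top$. Summing the per-sample Hessians gives the full-data Hessian $\mathcal{H}_{\hat{\Theta}} = \sum_j x_j x_j^\top \otimes W_j$ appearing in Eq.~\eqref{eq:inf}.

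Substituting these into Eq.~\eqref{eq:inf} gives
\[
\widehat{\Delta\ell}^{\mathrm{IF}}_i = (\tilde{X}_i g_i)^\top \mathcal{H}_{\hat{\Theta}}^{-1} (\tilde{X}_i g_i) = g_i^\top \bigl(\tilde{X}_i^\top \mathcal{H}_{\hat{\Theta}}^{-1} \tilde{X}_i\bigr) g_i = g_i^\top H_{ii}\, g_i,
\]
which is the claim, by the definition of $H_{ii}$. Unlike the Newton case, no Woodbury step is needed, because the full Hessian is used rather than the leave-one-out one. As a consistency check, we can note that this is the $W_iH_{ii}\to 0$ limit of the Newton formula, $g_i^\top H_{ii}(I_m - W_iH_{ii})^{-1}g_i$. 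This parallels the scalar relation $\widehat{\Delta\ell}^{\mathrm{NS}}_i/\widehat{\Delta\ell}^{\mathrm{IF}}_i = 1/(1-h_{ii})$.

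The only delicate point is the invertibility of $\mathcal{H}_{\hat{\Theta}}$. Each $W_j$ annihilates the all-ones vector, so $\mathcal{H}_{\hat{\Theta}}$ is singular on the directions $v\otimes \mathbf{1}_m$. This reflects the softmax shift invariance of the parameters. I would handle it in one of two ways. The first is to fix a reference class, which restricts $\Theta$ to a complement of the null space. The second is to interpret $\mathcal{H}_{\hat{\Theta}}^{-1}$ as the Moore--Penrose pseudo-inverse. In either case the formula is well defined, because $g_i^\top \mathbf{1}_m = 0$ places $\tilde{X}_i g_i$ in the range of $\mathcal{H}_{\hat{\Theta}}$. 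The same convention should then be applied to the definition of $H_{ii}$. In the previous proposition this subtlety was implicitly assumed away, and here I would state the assumption explicitly.
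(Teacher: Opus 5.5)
Your proof is correct and reaches the formula by a more direct route than the paper. The paper never substitutes into Eq.~\eqref{eq:inf}. It reads the result off the preceding Newton-step proposition by replacing $(I_m - W_iH_{ii})^{-1}$ with $I_m$, calling this keeping ``only the first-order term in the Taylor expansion of the loss.'' As worded, that justification is imprecise. Both estimators use the same first-order expansion of the loss and differ only in which Hessian is inverted. What the replacement actually does is keep the zeroth-order term of $\sum_{k\ge 0}(W_iH_{ii})^k$, i.e.\ discard the Woodbury correction that turns $\tilde X_i^\top\mathcal{H}_{\hat\Theta}^{-1}\tilde X_i = H_{ii}$ into $\tilde X_i^\top\mathcal{H}_{-i}^{-1}\tilde X_i$.

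Your substitution of $\tilde X_i g_i$ and $\mathcal{H}_{\hat\Theta}$ into Eq.~\eqref{eq:inf} makes the identity immediate from the definition of $H_{ii}$. Because it never passes through the Woodbury identity, it needs no invertibility of $I_m - W_iH_{ii}$, the multinomial analogue of $h_{ii}<1$. The paper's route instead conveys the structural point that the influence estimate is the Newton estimate with the leave-one-out correction removed, which you recover anyway as your consistency check.

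Your treatment of the singularity of $\mathcal{H}_{\hat\Theta}$ along $v\otimes\mathbf{1}_m$ goes beyond the paper. The paper writes $\mathcal{H}_{\hat\Theta}^{-1}$ without comment, although it never exists in the overparameterized softmax parameterization. Your observation that $\mathbf{1}_m^\top g_i = 0$ places $\tilde X_i g_i$ in the range of $\mathcal{H}_{\hat\Theta}$ is what makes the quadratic form well posed. It also makes the value the same under the pseudo-inverse and under a reference-class parameterization.
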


\begin{proof}
    The influence function approximation is a special case of the Newton step approximation where we take only the first-order term in the Taylor expansion of the loss. This corresponds to replacing the matrix inverse $(I_m - W_i H_{ii})^{-1}$ with the identity matrix $I_m$. 
\end{proof}

For $m = 2$ the per-sample weight collapses to the scalar $w_j$, $\mathcal{H}_{\hat{\Theta}} = X^\top W X$ factorizes as a single Kronecker product, and we recover the binary formulas of the previous subsection. In general $\mathcal{H}$ does \emph{not} factorize as a Kronecker product because $W_j$ depends on $j$, so the multinomial leverage cannot be separated into a sample term and a class term as in the binary case.

\section{Details on the Experimental Setup}
\label{app:exp_details}

\subsection{Model Architectures and Training Hyperparameters}
\label{app:arch_hyperparams}

We evaluate our surrogates on a deliberately diverse benchmark spanning tabular regression, image classification, and text classification, with model sizes ranging from a single linear layer to a fully fine-tuned transformer. For each (dataset, architecture) pair, target and reference (shadow) models are trained with the \emph{same} hyperparameters; only the random seed, which controls both weight initialization and the training split differs. We train 200 reference models and 16 target models per setting. Table~\ref{tab:archs} summarizes the architectures and Table~\ref{tab:hparams} the training hyperparameters. The detailed implementation can be found in the codebase.

\begin{table}[h]
    \centering
    \small
    \caption{Datasets and architectures used in our benchmark. $|\mathcal{D}|$ denotes the number of training examples used to draw membership splits. \#classes is only relevant for classification tasks.}
    \setlength{\tabcolsep}{4pt}
    \resizebox{\textwidth}{!}{%
    \begin{tabular}{llllrr}
        \toprule
        Dataset & Task & Model & Backbone & $|\mathcal{D}|$ & \#classes \\
        \midrule
        UCI Energy            & Reg.     & MLP      & 2 hidden blocks $[64,32]$, dropout $0.3$ & 614     & --- \\
        California Housing    & Reg.     & Linear Reg.       & 1 linear layer             & 16{,}512 & --- \\
        California Housing    & Reg.     & MLP             & 3 hidden blocks $[128, 128, 128]$           & 16{,}512 & --- \\
        BloodMNIST            & Classif. & DINO   & DINOv2 ViT-S/14 (frozen + linear head)          & 8{,}000  & 8 \\ 
        CIFAR-10              & Classif. & CNN             & 3 conv blocks,       & 50{,}000 & 10 \\
        CIFAR-100             & Classif. & ResNet-20       & 3 stages, dropout $0.3$ & 50{,}000 & 100 \\
        SST-2                 & Classif. & BERT & BERT-tiny (L=2, H=128),       & 67{,}349 & 2 \\
        SST-2                 & Classif. & BERT  & BERT-base (frozen + linear head)   & 67{,}349 & 2 \\
        \bottomrule
    \end{tabular}
    }
    \label{tab:archs}
\end{table}

\begin{table}[h]
\centering
\small
\caption{Training hyperparameters. Loss is quadratic for regression and cross-entropy for classification.}
\setlength{\tabcolsep}{4pt}
\begin{tabular}{llrrrlrr}
\toprule
Dataset & Model & Epochs & Batch & LR & Optimizer & Weight decay \\
\midrule
UCI Energy           & MLP     & 300 & 32   & $10^{-3}$ & Adam          & $10^{-3}$  \\
California Housing   & Linear Reg.     & 200 & 256  & $10^{-2}$ & Adam          & $10^{-3}$  \\
California Housing   & MLP            & 200 & 256  & $10^{-3}$ & Adam          & $5\!\cdot\!10^{-4}$  \\
BloodMNIST           & DINO (linear)  & 100 & 512  & $10^{-2}$ & SGD ($\mu\!=\!0.9$) & $10^{-6}$  \\
CIFAR-10             & CNN            & 25 & 256  & $10^{-3}$ & AdamW & $5\!\cdot\!10^{-4}$ \\
CIFAR-100            & ResNet-20      & 50 & 256  & $10^{-3}$ & SGD ($\mu\!=\!0.9$) & $5\!\cdot\!10^{-4}$ \\
SST-2                & BERT (full FT) & 10  & 32   & $10^{-4}$ & AdamW         & $10^{-2}$  \\
SST-2                & BERT (linear)  & 10 & 32   & $10^{-3}$ & AdamW         & $10^{-2}$ \\
\bottomrule
\end{tabular}
\label{tab:hparams}
\end{table}

\subsection{Time and Compute Resources}
\label{app:compute_time}

All experiments were run using V100 (32\,GB) GPU. The values presented in Table~\ref{tab:compute} are the wall-clock times reported using dedicated packages. They split into four phases: (i) training the reference ensemble, (ii) training the target models, (iii) running the LiRA, RMIA and Shokri attacks plus the per-point statistics, and (iv) computing the leverage-score / influence / Newton surrogates on each target. We report the total time spent in each phase, as well as the total time for the entire benchmark. 

To evaluate cost-effectiveness, surrogate performance is reported as the time required to compute scores for a single target, allowing a direct comparison with the training of the $200$ reference models. While actual implementation involves computing scores across all targets, resulting in a total runtime approximately 16 times the per-target figure, the single-target metric highlights the efficiency gains. For example, using the DiNO linear model, computing surrogates for one target takes only 15 seconds; this represents a $50,000 \times$ speedup compared to the $60$ hours required to train all the reference (shadow) models.

\begin{table}[t]
    \centering
    \caption{Compute budget for our experimental setting. Values are observed run-times in GPU-hours. \emph{Surrogates} aggregates the GPU time spent computing the influence / Newton scores surrogates over all targets.}
\footnotesize
\setlength{\tabcolsep}{2pt}
\resizebox{\textwidth}{!}{%
\begin{tabular}{llrrrrrr}
\toprule
&  \multicolumn{3}{c}{Training}  & Attacks+Stats & Surrogates & IF (full network) & Total \\
\cmidrule(l){2-4}\cmidrule(lr){5-5}\cmidrule(lr){6-6}\cmidrule(lr){7-7}\cmidrule(lr){8-8}
Dataset / Model  &  per-model & 200 refs  & 16 targets   & 200 refs& per-target & per-target &GPU-h \\
\midrule
UCI Energy / MLP            & 30\,s   & 1.7\,h  & 8\,min  & 30\,s   & 3\,s   & -- & 1.7\,h  \\
California / Linear reg.    & 1\,min  & 3.3\,h  & 0.3\,h  & 7\,min  & 10\,s  & -- & 3.6\,h  \\
California / MLP            & 1\,min  & 3.3\,h  & 0.3\,h  & 7\,min  & 10\,s  & 20\,min & 8.9\,h  \\
BloodMNIST / DINO (linear)  & 15\,min & 60.0\,h & 4.0\,h  & 5\,min  & 15\,s  & --& 64.0\,h \\
CIFAR-10 / CNN              & 4\,min  & 13.0\,h & 1.0\,h  & 20\,min & 30\,s  & 2.2\,h& 49.2\,h \\
CIFAR-100 / ResNet-20       & 13\,min & 44.4\,h & 3.5\,h  & 45\,min & 1\,min &-- & 48.5\,h \\
SST-2 / BERT-tiny (full FT) & 6\,min  & 20\,h   & 1.6\,h  & 17\,min & 45\,s  &--& 20.6\,h \\
SST-2 / BERT-base (linear)  & 19\,min &64.6\,h  & 5.0\,h  & 18\,min & 3\,min  &-- & 70\,h  \\

\addlinespace
\textbf{Total}  &    &   &      &      &       & &\textbf{266 GPU-h} \\
\bottomrule
\end{tabular}}
\label{tab:compute}
\end{table}

The training phase dominates the total cost. Once all models are trained, surrogates and attacks reuse the same models.

\section{Additional Experimental Results}
\label{app:add_exp_results}
Detailed results for the LiRA ASR top-$k$\% recall are reported in Table~\ref{tab:lira_summary}. We observe that the Newton and Influence surrogates outperform the loss surrogate across most datasets and architectures. Although the Spearman correlation is often higher for our surrogates, the absolute values remain quite low (except for the CIFAR datasets).

This  suggests that the ranking of samples by our surrogates is not perfectly aligned with the ranking by the LiRA score, mostly for the points that are in the middle of the distribution. This is not surprising as the LiRA ASR score is very noisy in this part of the distribution. This intuition is confirmed by Figure~\ref{fig:scatter_loss_influence_lira}: points in the middle of the distribution are scattered, whereas in the tails the ranking induced by the surrogates is much more consistent with the LiRA score.

\subsection{Comparison with full-network Influence functions}

To evaluate the impact of the last-layer approximation, we computed full-network influence functions for CIFAR10 (CNN) and California Housing (MLP). These estimates were calculated using inverse Hessian-vector products \citep{hessianProduct} via the Conjugate Gradient (CG) algorithm \citep{hestenesMethodsConjugateGradients1952}. We ensured CG convergence by setting a tolerance of $10^{-2}$ and applying a notably high damping value of $10$; while strictly necessary for the algorithm to converge in our case, such heavy damping inherently alters the resulting influence estimates.

As presented in Table~\ref{tab:compute}, the computational overhead for the full-network approach is substantial: for CIFAR10, processing a single model took more than 2 hours, compared to only 30 seconds for the last-layer surrogate. Despite this significantly higher cost, the full-network influence function performed worse in identifying vulnerable samples. This suggests that the last-layer approximation not only offers a massive gain in efficiency but may also act as a useful regularizer by focusing on the most discriminative features. Furthermore, as noted by \citet{basu2021influence}, this underperformance may come from the inherent fragility and poor scaling of influence functions when applied to the non-convex landscapes of deep networks.

Figure~\ref{fig:full_inf} illustrates the top-$k$\% recall with the additional full network influence function surrogate $\nabla \ell_i^\top H_\theta^{-1}\nabla \ell_i$. The values are reported in the CIFAR-10 and California Housing (MLP) cells in Table~\ref{tab:lira_summary}.

\label{app:full_inf}

\begin{figure}[h]
    \center
    \includegraphics{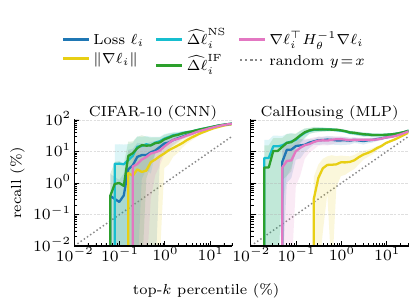}
    \caption{Recall of the top-$k$\% \textbf{LiRA}'s attack most vulnerable samples (ranked by the Average Success Rate) when keeping the top-$k$\% for each of the classical surrogates as well as the full Influence Function surrogate $\nabla \ell_i^\top H_\theta^{-1}\nabla \ell_i$. Axes are log-log; the dotted diagonal $y=x$ is the random baseline. Solid lines are means over $16$ target models with $\pm 1$ std bands;}
    \label{fig:full_inf}
\end{figure}

\subsection{Top-$k$ recall and Spearman correlation for LiRA attack}

\label{app:lira_full_results}

We report full results for the LiRA MIA in Table~\ref{tab:lira_summary}. Notably, we report the Spearman rank correlation between the surrogate scores and the per-point ASR for reference, but we do not find a strong correlation. We attribute this to the fact that points near the center of the vulnerability distribution yield an Attack Success Rate (ASR) close to $0.5$, offering little informative value. In contrast, top-$k$ recall effectively isolates the highly vulnerable points, which are the primary focus for practical security auditing and defense. 

For classification models we also report the self-entropy defined as $\mathrm{H}_i = -\sum_{k=1}^{m} \hat{p}_{i,k} \log \hat{p}_{i,k}$, where $\hat{p}_{i,k}$ is the model's predicted probability of class $k$ on input $x_i$. Self-entropy is a standard confidence-based MIA baseline~\citep{salem2018ml, song2021systematic}: training samples typically receive more peaked predictions and hence lower entropy than non-members.

\begin{table}[t]
\centering
\small
\renewcommand{\arraystretch}{1.2}
\setlength{\tabcolsep}{4pt}
\caption{LiRA membership-inference attack: top-$k$ recall (\%) and Spearman rank correlation. Best results per dataset are in \textbf{bold}. Contrast with Table~\ref{tab:proxy-recall-at-5pct-lira}, the recall values here are when seeking the top-$k$ samples with the highest surrogates scores, by keeping the top-$k$ samples with the highest surrogate scores and checking how many of them are in the top-$k$ samples with the highest LiRA ASR score.}
\label{tab:lira_summary}
\resizebox{\textwidth}{!}{%
\begin{tabular}{ll ccc c}
\toprule

& & \multicolumn{3}{c}{Recall (\%)} & \\
\cmidrule(lr){3-5}
Dataset & Surrogate & @0.1\% & @1\% & @5\% & Spearman $\rho$ \\
\midrule
\multirow{5}{*}{\makecell[l]{SST2 \\ (BERT, full ft.)}} & Loss $\ell_i$ & $0.00 \pm 0.00$ & $5.44 \pm 1.36$ & $30.83 \pm 1.11$ & $\mathbf{0.474} \pm \mathbf{0.022}$ \\
 & $\|\nabla\ell_i\|$ & $0.00 \pm 0.00$ & $0.57 \pm 0.18$ & $4.64 \pm 0.33$ & -- \\
 & Self-entropy & $0.02 \pm 0.06$ & $6.34 \pm 1.16$ & $28.52 \pm 1.10$ & $0.470 \pm 0.022$ \\
 & $\widehat{\Delta\ell}^{\mathrm{NS}}_i$ & $1.02 \pm 1.47$ & $\mathbf{7.05} \pm \mathbf{1.23}$ & $\mathbf{31.30} \pm \mathbf{1.44}$ & $0.389 \pm 0.036$ \\
 & $\widehat{\Delta\ell}^{\mathrm{IF}}_i$ & $\mathbf{1.22} \pm \mathbf{1.54}$ & $6.88 \pm 1.30$ & $31.28 \pm 1.44$ & $0.389 \pm 0.036$ \\
\midrule
\multirow{5}{*}{\makecell[l]{SST2 \\ (BERT, linear probe)}} & Loss $\ell_i$ & $0.05 \pm 0.08$ & $\mathbf{2.30} \pm \mathbf{0.64}$ & $\mathbf{7.94} \pm \mathbf{0.50}$ & $\mathbf{0.045} \pm \mathbf{0.005}$ \\
 & $\|\nabla\ell_i\|$ & $0.00 \pm 0.00$ & $1.56 \pm 0.70$ & $6.64 \pm 0.65$ & -- \\
 & Self-entropy & $\mathbf{0.60} \pm \mathbf{1.26}$ & $1.07 \pm 0.46$ & $5.60 \pm 0.34$ & $0.026 \pm 0.005$ \\
 & $\widehat{\Delta\ell}^{\mathrm{NS}}_i$ & $0.00 \pm 0.00$ & $1.80 \pm 0.87$ & $6.88 \pm 0.57$ & $0.037 \pm 0.005$ \\
 & $\widehat{\Delta\ell}^{\mathrm{IF}}_i$ & $0.30 \pm 0.94$ & $2.27 \pm 0.70$ & $7.01 \pm 0.43$ & $0.038 \pm 0.005$ \\
\midrule
\multirow{4}{*}{\makecell[l]{CalHousing \\ (ridge)}} & Loss $\ell_i$ & $6.33 \pm 7.74$ & $3.16 \pm 1.81$ & $8.65 \pm 1.05$ & $0.033 \pm 0.006$ \\
 & $\|\nabla\ell_i\|$ & $6.33 \pm 7.74$ & $3.16 \pm 1.81$ & $8.65 \pm 1.05$ & $0.033 \pm 0.006$ \\
 & $\widehat{\Delta\ell}^{\mathrm{NS}}_i$ & $36.44 \pm 15.34$ & $\mathbf{7.54} \pm \mathbf{2.50}$ & $9.30 \pm 1.23$ & $0.050 \pm 0.007$ \\
 & $\widehat{\Delta\ell}^{\mathrm{IF}}_i$ & $\mathbf{38.02} \pm \mathbf{14.64}$ & $\mathbf{7.54} \pm \mathbf{2.50}$ & $\mathbf{9.30} \pm \mathbf{1.24}$ & $\mathbf{0.050} \pm \mathbf{0.007}$ \\
\midrule
\multirow{4}{*}{\makecell[l]{CalHousing \\ (MLP)}} & Loss $\ell_i$ & $14.90 \pm 9.25$ & $22.59 \pm 6.49$ & $23.58 \pm 4.03$ & $0.276 \pm 0.028$ \\
 & $\|\nabla\ell_i\|$ & $0.00 \pm 0.00$ & $4.53 \pm 2.83$ & $10.81 \pm 2.26$ & $0.198 \pm 0.019$ \\
 & $\widehat{\Delta\ell}^{\mathrm{NS}}_i$ & $23.95 \pm 11.89$ & $45.84 \pm 5.54$ & $33.22 \pm 3.96$ & $0.297 \pm 0.027$ \\
 & $\widehat{\Delta\ell}^{\mathrm{IF}}_i$ & $\mathbf{25.49} \pm \mathbf{11.97}$ & $\mathbf{46.46} \pm \mathbf{4.79}$ & $\mathbf{33.46} \pm \mathbf{3.88}$ & $\mathbf{0.297} \pm \mathbf{0.027}$ \\
& $\nabla \ell_i^\top H_\theta^{-1}\nabla \ell_i$ & -- & $24.57 \pm 5.55$ & $23.72 \pm 2.85$ & $0.266 \pm 0.025$ \\
\midrule
\multirow{5}{*}{\makecell[l]{CIFAR-10 \\ (CNN)}} & Loss $\ell_i$ & $4.02 \pm 3.23$ & $17.30 \pm 2.10$ & $42.98 \pm 1.17$ & $0.813 \pm 0.004$ \\
 & $\|\nabla\ell_i\|$ & $0.01 \pm 0.05$ & $8.40 \pm 1.65$ & $33.52 \pm 1.28$ & $0.811 \pm 0.003$ \\
 & Self-entropy & $0.78 \pm 1.57$ & $10.36 \pm 1.33$ & $38.70 \pm 1.35$ & $0.808 \pm 0.004$ \\
 & $\widehat{\Delta\ell}^{\mathrm{NS}}_i$ & $\mathbf{15.28} \pm \mathbf{6.95}$ & $\mathbf{25.59} \pm \mathbf{1.77}$ & $\mathbf{48.14} \pm \mathbf{1.27}$ & $0.814 \pm 0.003$ \\
 & $\widehat{\Delta\ell}^{\mathrm{IF}}_i$ & $14.07 \pm 7.06$ & $24.48 \pm 1.99$ & $47.76 \pm 1.31$ & $\mathbf{0.814} \pm \mathbf{0.003}$ \\
 & $\nabla \ell_i^\top H_\theta^{-1}\nabla \ell_i$ & --& $24.57 \pm 5.55$ & $23.72 \pm 2.85$ & $0.266 \pm 0.025$ \\
\midrule
\multirow{5}{*}{\makecell[l]{CIFAR-100 \\ (ResNet-20)}} & Loss $\ell_i$ & $2.26 \pm 2.86$ & $7.40 \pm 1.80$ & $20.53 \pm 0.72$ & $0.658 \pm 0.007$ \\
 & $\|\nabla\ell_i\|$ & $0.51 \pm 1.39$ & $2.68 \pm 0.69$ & $13.94 \pm 0.93$ & $0.659 \pm 0.005$ \\
 & Self-entropy & $0.25 \pm 0.99$ & $3.84 \pm 1.15$ & $19.75 \pm 1.11$ & $0.686 \pm 0.007$ \\
 & $\widehat{\Delta\ell}^{\mathrm{NS}}_i$ & $4.84 \pm 4.60$ & $\mathbf{13.51} \pm \mathbf{2.82}$ & $\mathbf{28.64} \pm \mathbf{3.15}$ & $\mathbf{0.690} \pm \mathbf{0.024}$ \\
 & $\widehat{\Delta\ell}^{\mathrm{IF}}_i$ & $\mathbf{5.77} \pm \mathbf{3.82}$ & $13.08 \pm 2.66$ & $28.24 \pm 2.80$ & $0.689 \pm 0.022$ \\
\midrule
\multirow{5}{*}{\makecell[l]{BloodMNIST \\ (DINOv2)}} & Loss $\ell_i$ & $10.55 \pm 10.08$ & $33.22 \pm 3.72$ & $59.06 \pm 2.19$ & $0.600 \pm 0.005$ \\
 & $\|\nabla\ell_i\|$ & $10.91 \pm 14.54$ & $39.95 \pm 4.54$ & $62.34 \pm 2.07$ & $0.604 \pm 0.005$ \\
 & Self-entropy & $0.00 \pm 0.00$ & $14.67 \pm 3.72$ & $44.45 \pm 2.05$ & $0.586 \pm 0.005$ \\
 & $\widehat{\Delta\ell}^{\mathrm{NS}}_i$ & $\mathbf{34.70} \pm \mathbf{26.66}$ & $\mathbf{52.68} \pm \mathbf{15.24}$ & $67.00 \pm 10.01$ & $0.604 \pm 0.031$ \\
 & $\widehat{\Delta\ell}^{\mathrm{IF}}_i$ & $27.24 \pm 17.77$ & $47.27 \pm 7.87$ & $\mathbf{67.09} \pm \mathbf{6.55}$ & $\mathbf{0.608} \pm \mathbf{0.024}$ \\
\midrule
\multirow{4}{*}{\makecell[l]{UCI Energy \\ (tab.\ MLP)}} & Loss $\ell_i$ & $0.00 \pm 0.00$ & $2.01 \pm 8.06$ & $20.74 \pm 4.87$ & $0.528 \pm 0.040$ \\
 & $\|\nabla\ell_i\|$ & $0.00 \pm 0.00$ & $6.25 \pm 12.92$ & $26.83 \pm 9.21$ & $0.507 \pm 0.036$ \\
 & $\widehat{\Delta\ell}^{\mathrm{NS}}_i$ & $0.00 \pm 0.00$ & $10.90 \pm 19.74$ & $28.25 \pm 7.50$ & $\mathbf{0.541} \pm \mathbf{0.032}$ \\
 & $\widehat{\Delta\ell}^{\mathrm{IF}}_i$ & $\mathbf{6.25} \pm \mathbf{25.00}$ & $\mathbf{14.72} \pm \mathbf{23.99}$ & $\mathbf{30.32} \pm \mathbf{7.89}$ & $0.540 \pm 0.032$ \\
\bottomrule
\end{tabular}
}
\end{table}

\subsection{Top-$k$ recall and Spearman correlation for RMIA attack and Classifier Shokri's attack}

\label{app:rmia_shokri_full_results}

We provide the top-$k$ recall curves for RMIA and Shokri's attack in Figure~\ref{fig:topk_rmia} and Figure~\ref{fig:topk_shokri} respectively.

\begin{figure}
  \centering
    \includegraphics{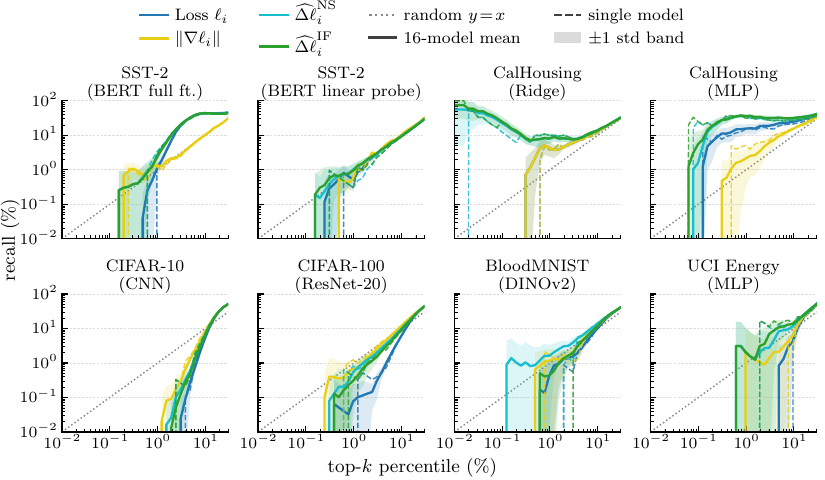} 
    \caption{Recall of the top-$k$\% \textbf{RMIA}'s attack most vulnerable samples (ranked by the Average Success Rate) when keeping the top-$k$\% for each of the four surrogate: the loss $\ell_i$, the gradient norm $\|\nabla\ell_i\|$, the influence estimate $\widehat{\Delta\ell}^{\mathrm{IF}}_i$, and the Newton estimate $\widehat{\Delta\ell}^{\mathrm{NS}}_i$. Axes are log-log; the dotted diagonal $y=x$ is the random baseline. Solid lines are means over $16$ target models with $\pm 1$ std bands; dashed-lines show a single representative model.}
    \label{fig:topk_rmia}
\end{figure}

\begin{figure}
  \centering
    \includegraphics{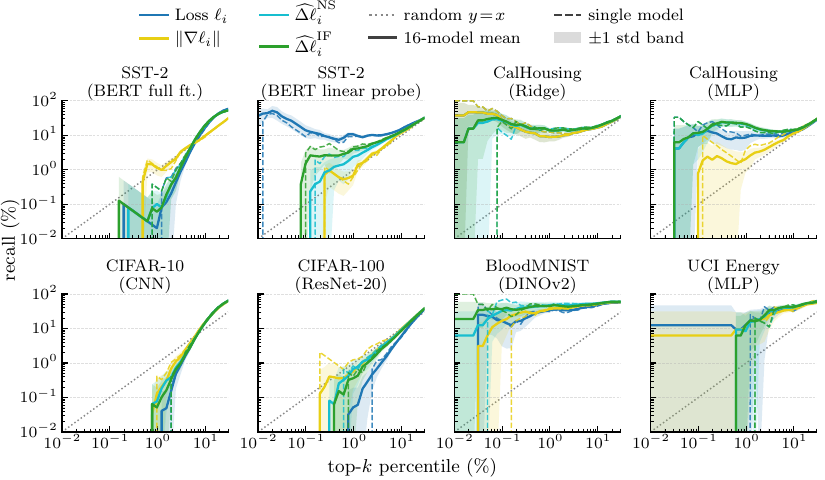} 
    \caption{Recall of the top-$k$\% \textbf{Shokri}'s attack most vulnerable samples (ranked by the Average Success Rate) when keeping the top-$k$\% for each of the four surrogate: the loss $\ell_i$, the gradient norm $\|\nabla\ell_i\|$, the influence estimate $\widehat{\Delta\ell}^{\mathrm{IF}}_i$, and the Newton estimate $\widehat{\Delta\ell}^{\mathrm{NS}}_i$. Axes are log-log; the dotted diagonal $y=x$ is the random baseline. Solid lines are means over $16$ target models with $\pm 1$ std bands; dashed-lines show a single representative model.}
    \label{fig:topk_shokri}
\end{figure}

\subsection{Additional visualizations}
We provide additional visualizations of the samples with the highest and lowest influence scores and Newton scores for the BloodMNIST, CIFAR-10 and CIFAR-100 datasets in Figure~\ref{fig:top_samples_images}. We also provide the top-10 highest and lowest influence scores and Newton scores for the BERT model trained on the SST-2 dataset with linear probing in Table~\ref{tab:bert_top_sents_bert_freeze}. The joint distribution of local loss $\ell_i$ and influence score for every training sample of a single target model, colored by per-sample LiRA ASR, is presented in Figure~\ref{fig:scatter_loss_influence_lira}.

\begin{figure}[h]
  \centering
    \includegraphics{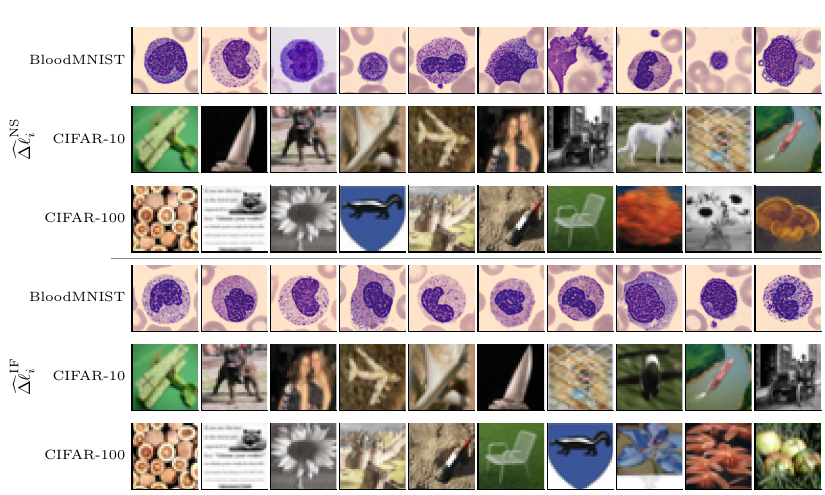}
    \caption{Visualization of the 10 images with the highest influence and Newton scores for BloodMNIST (top), CIFAR-10 (middle) and CIFAR-100 (bottom).}
    \label{fig:top_samples_images}
\end{figure}

\begin{figure}[h]
  \centering
    \includegraphics{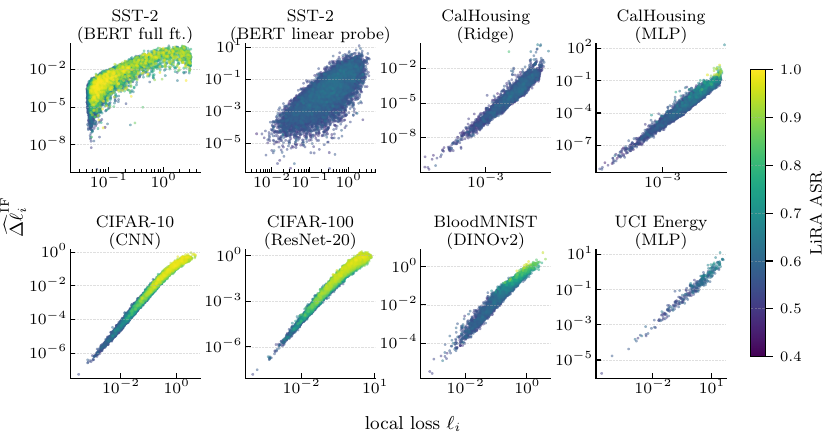}
    \caption{Joint distribution of local loss $\ell_i$ and influence for every training sample of a single target model, coloured by per-sample LiRA ASR, log-log axes.}
    \label{fig:scatter_loss_influence_lira}
\end{figure}

\begin{table}[h]
\centering
\small
\setlength{\tabcolsep}{4pt}
\renewcommand{\arraystretch}{1.15}
\caption{Top-10 highest-$|$leverage$|$ training sentences for BERT (linear probe) on SST2, ranked by Newton (left) and Influence (right) surrogates.}
\label{tab:bert_top_sents_bert_freeze}
\begin{tabular}{@{}c p{0.42\linewidth} p{0.42\linewidth}@{}}
\toprule
Rank & $\widehat{\Delta\ell}^{\mathrm{NS}}_i$ & $\widehat{\Delta\ell}^{\mathrm{IF}}_i$ \\
\midrule
1 & to appearing in this junk that 's tv sitcom material at best & is never quite able to overcome the cultural moat surrounding its l... \\
2 & the filmmakers wisely decided to let crocodile hunter steve irwin d... & spider-man \\
3 & the kind that pretends to be passionate and truthful but is really ... & nicks refuses to let slackers be seen as just another teen movie , ... \\
4 & a supernatural mystery that does n't know whether it wants to be a ... & recycle images and characters that were already tired 10 years ago \\
5 & less the cheap thriller you 'd expect than it is a fairly revealing... & report that the children of varying ages in my audience never cough... \\
6 & played-out & breach gaps in their relationships with their fathers \\
7 & of a callow rich boy who is forced to choose between his beautiful ... & it 's painful to watch witherspoon 's talents wasting away inside u... \\
8 & that collect a bunch of people who are enthusiastic about something... & its drastic iconography \\
9 & : the widowmaker is derivative , overlong , and bombastic -- yet su... & of a callow rich boy who is forced to choose between his beautiful ... \\
10 & ride around a pretty tattered old carousel . & never does '' lilo \& stitch '' reach the emotion or timelessness of... \\
\bottomrule
\end{tabular}
\end{table}

\clearpage
\newpage

\end{document}